\PassOptionsToPackage{unicode}{hyperref}
\PassOptionsToPackage{hyphens}{url}
\PassOptionsToPackage{dvipsnames,svgnames,x11names}{xcolor}
\documentclass[
  12pt,
  letterpaper,
]{article}

\usepackage{amsmath,amssymb}
\usepackage{setspace}
\usepackage{iftex}
\ifPDFTeX
  \usepackage[T1]{fontenc}
  \usepackage[utf8]{inputenc}
  \usepackage{textcomp} 
\else 
  \usepackage{unicode-math}
  \defaultfontfeatures{Scale=MatchLowercase}
  \defaultfontfeatures[\rmfamily]{Ligatures=TeX,Scale=1}
\fi
\usepackage{lmodern}
\ifPDFTeX\else  
\fi
\IfFileExists{upquote.sty}{\usepackage{upquote}}{}
\IfFileExists{microtype.sty}{
  \usepackage[]{microtype}
  \UseMicrotypeSet[protrusion]{basicmath} 
}{}
\makeatletter
\@ifundefined{KOMAClassName}{
  \IfFileExists{parskip.sty}{%
    \usepackage{parskip}
  }{
    \setlength{\parindent}{0pt}
    \setlength{\parskip}{6pt plus 2pt minus 1pt}}
}{
  \KOMAoptions{parskip=half}}
\makeatother
\usepackage{xcolor}
\usepackage[top=0.75in,bottom=0.75in,left=1in,right=1in]{geometry}
\makeatletter
\ifx\paragraph\undefined\else
  \let\oldparagraph\paragraph
  \renewcommand{\paragraph}{
    \@ifstar
      \xxxParagraphStar
      \xxxParagraphNoStar
  }
  \newcommand{\xxxParagraphStar}[1]{\oldparagraph*{#1}\mbox{}}
  \newcommand{\xxxParagraphNoStar}[1]{\oldparagraph{#1}\mbox{}}
\fi
\ifx\subparagraph\undefined\else
  \let\oldsubparagraph\subparagraph
  \renewcommand{\subparagraph}{
    \@ifstar
      \xxxSubParagraphStar
      \xxxSubParagraphNoStar
  }
  \newcommand{\xxxSubParagraphStar}[1]{\oldsubparagraph*{#1}\mbox{}}
  \newcommand{\xxxSubParagraphNoStar}[1]{\oldsubparagraph{#1}\mbox{}}
\fi
\makeatother

\usepackage{longtable,booktabs,array}
\usepackage{calc} 
\usepackage{etoolbox}
\makeatletter
\patchcmd\longtable{\par}{\if@noskipsec\mbox{}\fi\par}{}{}
\makeatother
\IfFileExists{footnotehyper.sty}{\usepackage{footnotehyper}}{\usepackage{footnote}}
\makesavenoteenv{longtable}
\usepackage{graphicx}
\makeatletter
\newsavebox\pandoc@box
\newcommand*\pandocbounded[1]{
  \sbox\pandoc@box{#1}%
  \Gscale@div\@tempa{\textheight}{\dimexpr\ht\pandoc@box+\dp\pandoc@box\relax}%
  \Gscale@div\@tempb{\linewidth}{\wd\pandoc@box}%
  \ifdim\@tempb\p@<\@tempa\p@\let\@tempa\@tempb\fi
  \ifdim\@tempa\p@<\p@\scalebox{\@tempa}{\usebox\pandoc@box}%
  \else\usebox{\pandoc@box}%
  \fi%
}
\def\fps@figure{htbp}
\makeatother

\usepackage{algorithm}
\usepackage{algpseudocode}
\usepackage{graphicx}
\usepackage{scrlayer-scrpage}
\usepackage{etoolbox}
\usepackage{setspace}
\clearpairofpagestyles
\ofoot*{\pagemark}
\AtBeginDocument{}
\makeatletter
\@ifpackageloaded{caption}{}{\usepackage{caption}}
\AtBeginDocument{%
\ifdefined\contentsname
  \renewcommand*\contentsname{Table of contents}
\else
  \newcommand\contentsname{Table of contents}
\fi
\ifdefined\listfigurename
  \renewcommand*\listfigurename{List of Figures}
\else
  \newcommand\listfigurename{List of Figures}
\fi
\ifdefined\listtablename
  \renewcommand*\listtablename{List of Tables}
\else
  \newcommand\listtablename{List of Tables}
\fi
\ifdefined\figurename
  \renewcommand*\figurename{Figure}
\else
  \newcommand\figurename{Figure}
\fi
\ifdefined\tablename
  \renewcommand*\tablename{Table}
\else
  \newcommand\tablename{Table}
\fi
}
\@ifpackageloaded{float}{}{\usepackage{float}}
\floatstyle{ruled}
\@ifundefined{c@chapter}{\newfloat{codelisting}{h}{lop}}{\newfloat{codelisting}{h}{lop}[chapter]}
\floatname{codelisting}{Listing}

\usepackage{amsthm}
\theoremstyle{plain}
\newtheorem{proposition}{Proposition}[section]
\theoremstyle{plain}
\newtheorem{lemma}{Lemma}[section]
\theoremstyle{plain}
\newtheorem{theorem}{Theorem}[section]
\theoremstyle{remark}
\AtBeginDocument{}

\newtheorem{refremark}{Remark}[section]

\makeatother
\makeatletter
\@ifpackageloaded{caption}{}{\usepackage{caption}}
\@ifpackageloaded{subcaption}{}{\usepackage{subcaption}}
\makeatother

\usepackage[]{natbib}
\usepackage{bookmark}

\IfFileExists{xurl.sty}{\usepackage{xurl}}{} 
\hypersetup{
  colorlinks=true,
  linkcolor={blue},
  filecolor={Maroon},
  citecolor={blue},
  urlcolor={Blue},
  pdfcreator={LaTeX via pandoc}}

\author{}
\date{}

\begin{document}

\setstretch{1}
\title{Feedback-Coupled Memory Systems in Continuous Time}

\author{
Stefano Grassi\textsuperscript{a}\thanks{Corresponding author: stefano.g@bu.ac.th}
}

\date{
\textsuperscript{a}Bangkok University, Phahonyothin Rd, Khlong Nueng, Khlong Luang District, Pathum Thani 12120, Thailand\\[2ex]
June 24, 2026
}

\maketitle

\begin{abstract}
The Feedback-Coupled Memory Systems (FCMS) architecture formalizes
closed-loop coordination through four abstract operators, two of which
--- the agent update operator $f_i$ and the environmental update operator
$\Psi$ --- are left axiomatically undefined in the original framework. To
address this, $f_i$ is defined by Mechanism-Based Intelligence (MBI),
where agents update locally through a decentralized price mechanism and
economic principles, and $\Psi$ is defined by the Coupled Memory Graph
Process (CMGP), a non-Markovian framework where the environment is
treated as a physical substrate that records and responds to trajectory
history coherently without external forcing. The resulting
continuous-time FCMS instantiation achieves Lyapunov global
dissipativity governed by the computable threshold $4\beta^2 < 2\eta\mu\gamma^2$. 
This generalizes both the discrete FCMS stability
condition $4\eta\beta^2 < \gamma$ and CMGP's physical bifurcation
threshold $\alpha_c = 1/K$, confirming that memory dissipation must
outpace feedback gain as a universal organizing principle. Numerical
simulation with $N=2$ agents and mean-field validation at $N=10^6$
confirm the stability threshold and the self-reinforcing coordination
cascade that emerges when it is violated.
\end{abstract}

\noindent\textbf{Keywords:} feedback-coupled memory systems,
continuous-time coordination, Lyapunov dissipativity, Hopf bifurcation,
mechanism design, graph Laplacian dynamics, memory engine, early warning
signals \vspace{1em}

\section{Introduction}\label{sec-introduction}

Distributed agents interacting through persistent environments must
achieve collective order without centralized control, a challenge that
lies at the intersection of economics, dynamical systems theory, and
multi-agent artificial intelligence \citep{hayek1945}. This work extends
FCMS \citep{grassi2026fcms} to continuous time by providing explicit
functional forms for its two axiomatically defined operators \(f_i\) and
\(\Psi\). FCMS is a closed-loop dynamical architecture formalizing this
feedback loop through four operators \(\mathcal{A}\), \(\Phi\), \(f_i\),
\(\Psi\), reviewed in Section~\ref{sec-theoretical-background}. FCMS
proves that under dissipativity the system admits a bounded
forward-invariant region, coordination cannot be reduced to static
optimization, and bidirectional coupling is necessary. However, the
operators \(f_i\) and \(\Psi\) are defined axiomatically in FCMS leaving
their continuous-time instantiation an open problem. To close this gap,
this paper proposes to instantiate them with two frameworks from the
literature: Mechanism-Based Intelligence (MBI) \citep{grassi2025mbi} and
the Coupled Memory Graph Process (CMGP)
\citep{sarkar2025cmgp, sarkar2025memory}. MBI is a mechanism-design
framework where the Differentiable Price Mechanism (DPM) computes
incentives as a Vickrey--Clarke--Groves (VCG)-equivalent signal
\citep{vickrey1961, clarke1971, groves1973} guaranteeing Dominant
Strategy Incentive Compatibility (DSIC) and convergence
\citep{hurwicz2006}. MBI's discrete gradient update
\(\mathbf{x}_{i,t+1}=\mathbf{x}_{i,t}+\eta G_{i,t}\), where
\(G_i=-\nabla_{\mathbf{x}_i}\mathcal{L}^{\mathrm{global}}\) is the DPM
incentive signal, is the natural candidate for the continuous-time
instantiation of \(f_i\), becoming the gradient flow
\(\dot{\mathbf{x}}_i=-\eta\nabla_{\mathbf{x}_i}\mathcal{L}_i+\xi_i\) in
the continuous limit. CMGP is a physics graph framework where
memory-driven feedback generates coherence without external forcing
\citep{sarkar2025cmgp, sarkar2025memory}. CMGP's continuous memory field
\(\partial_t S(\mathbf{r},t)=-\alpha_s S+A\int_0^t
\Theta_s(t-\tau)G_\sigma(\mathbf{r}-\mathbf{r}(\tau))\,d\tau\),
governing how a physical substrate records and responds to trajectory
history, is the natural candidate for the continuous-time instantiation
of \(\Psi\), translated to the discrete multi-agent graph setting as
evolving edge weights
\(\dot{w}_{ij}=-\gamma w_{ij}+\psi(\mathbf{x}_i,\mathbf{x}_j)
\mathbb{I}_{ij}\). Taken together, the continuous-time FCMS yields a
fully specified closed-loop system whose global stability is governed by
the computable threshold \(4\beta^2 < 2\eta\mu\gamma^2\). This condition
generalizes both the discrete FCMS stability condition
\(4\eta\beta^2 < \gamma\) \citep{grassi2026fcms} and CMGP's physical
bifurcation threshold \(\alpha_c = 1/K\) \citep{sarkar2025memory},
confirming that memory dissipation must outpace feedback gain as a
universal organizing principle across physical and strategic multi-agent
systems. The paper is organized as follows.
Section~\ref{sec-theoretical-background} reviews MBI, CMGP, and FCMS and
identifies the open gaps in each.
Section~\ref{sec-theoretical-framework} develops the continuous-time
system by instantiating \(f_i\) and \(\Psi\).
Section~\ref{sec-stability-analysis} proves global dissipativity and
derives the stability threshold. Section~\ref{sec-simulation} validates
the threshold numerically with \(N=2\) agents.
Section~\ref{sec-discussion} discusses limitations and future
directions. Section~\ref{sec-conclusion} concludes and states the
implications for coordination theory.

\section{Theoretical Background}\label{sec-theoretical-background}

\subsection{MBI}\label{sec-mbi}

MBI \citep{grassi2025mbi} is characterized by a Differentiable Directed
Acyclic Graph (D-DAG) mapping interacting rational self-interested
agents \(A_i\) to a Planner \(P\), the global entity that defines the
system's overarching objective \(\mathcal{L}^{\text{global}}\). The
Planner is defined externally and holds no private information; its sole
role is institutional design \citep{hurwicz2006}. To ensure each agent's
optimal action \(\mathbf{x}_i^*\) aligns with the global objective, the
Planner leverages the forward and backward pass of the D-DAG to compute
the incentive signal

\begin{equation}\phantomsection\label{eq-incentive-signal}{\mathbf{G}_i = -\nabla_{\mathbf{x}_i}\mathcal{L}^{\text{global}}
\tag{1}}\end{equation}

which represents the negative marginal externality of agent action
\(\mathbf{x}_i\) on the global loss
\citep{vickrey1961, clarke1971, groves1973}. The framework is formalized
by four theorems: (i) DSIC, truth-telling is each agent's dominant
strategy regardless of others' actions; (ii) BR, the optimal stopping
condition for computational effort; (iii) BIC, truth-telling remains
optimal in expectation under asymmetric information about agents'
private types \citep{myerson1981}; (iv) Global Convergence, joint
self-interested optimization converges to the unique global optimum
under strict convexity and Lipschitz continuity of
\(\mathcal{L}^{\text{global}}\) \citep[Appendix B.6]{grassi2025mbi}. MBI
operates in discrete iterations over a D-DAG. The continuous-time limit
of the agent update operator \(f_i\) and its behavior under persistent
environmental feedback is not established in the original framework.

\subsection{CMGP}\label{sec-cmgp}

CMGP \citep{sarkar2025cmgp} is a non-Markovian framework where
interacting agents, represented as nodes on a directed graph, achieve
persistent coherence through heterogeneous memory and asymmetric
coupling. This autonomous, closed-loop dynamics generate structured,
phase-locked motion without external forcing. Considering a single
memoryless Brownian particle \citep{sarkar2025memory}, define
\(\mathbf{r}(t)\in
\mathbb{R}^2\) as the position of the particle and \(S:\mathbb{R}^2
\times\mathbb{R}_{+}\rightarrow\mathbb{R}\) as the scalar memory field.
The scalar memory field evolves dynamically according to the
integro-differential equation

\begin{equation}\phantomsection\label{eq-memory-field}{\partial_t S(\mathbf{r},t)=-\alpha_s S(\mathbf{r},t)+A\int_{0}^{t}
\Theta_s(t-\tau)\,G_{\sigma}\!\left(\mathbf{r}-\mathbf{r}(\tau)\right)
\,d\tau \tag{2}}\end{equation}

where \(S\) naturally dissipates through the decay term \(-\alpha_s
S(\mathbf{r},t)\) at rate \(\alpha_s=1/\tau_s\), while new information
is deposited with strength \(A\). The temporal kernel
\(\Theta_s(t-\tau)=
\alpha_s e^{-\alpha_s(t-\tau)}\) exponentially downweights older
history, giving more weight to recent trajectory history. Spatially,
\(G_{\sigma}
\!\left(\mathbf{r}-\mathbf{r}(\tau)\right)\) distributes deposited
memory over a characteristic width \(\sigma\) around the particle's
historical position \(\mathbf{r}(\tau)\). The particle's motion is
governed by a stochastic Volterra equation coupling intrinsic velocity
memory \(\mathcal{K}_m\) to the gradient feedback force
\(\kappa\nabla S\):

\begin{equation}\phantomsection\label{eq-particle-motion}{\dot{\mathbf{r}}(t)=\int_0^t \mathcal{K}_m(t-\tau)\dot{\mathbf{r}}
(\tau)d\tau-\kappa\nabla S(\mathbf{r}(t),t)+\xi(t) \tag{3}}\end{equation}

The memory field \(S(\mathbf{r},t)\) shapes the particle's trajectory
via \(\nabla S\), while the particle continuously rewrites \(S\) through
its motion. This reciprocal coupling gives rise to what Sarkar terms a
\emph{memory engine}: a self-organizing mechanism converting stored
trajectory history into predictive motion without external tuning
\citep{sarkar2025memory}. The stability of this coherence is governed by
a bifurcation threshold \(\alpha_s=\alpha_c:=\frac{1}{K}\), where \(K\)
encodes the transverse curvature of the memory field. Above this
threshold, perturbations decay and the current trajectory persists;
below it, feedback amplifies deviations, destabilizing uniform motion
and giving rise to phase-locked regimes. Coherence emerges at a
nonequilibrium fixed point defined by
\(\dot{\varepsilon}_s = I(t) - D(t) \approx 0\), where \(I(t)\) is the
energy injected into the field by the particle's imprinting and \(D(t)\)
is the energy dissipated through field decay. While Sarkar's framework
establishes memory-driven coherence for a single physical particle, its
extension to \(N\) strategic agents whose self-interested optimization
requires explicit incentive alignment, and its role as the environmental
update operator \(\Psi\) within a multi-layer coordination system,
remains open.

\subsection{FCMS}\label{sec-fcms}

FCMS \citep{grassi2026fcms} is a dynamical framework where the joint
configuration of agent states \(\mathbf{x}_t\) and a persistent
environment \(S_t\) projects a global coordination signal
\(L_t^{\text{global}}\) which is distributed through an incentive field
\(\mathbf{G}_t\), updating agent states \(\mathbf{x}_{t+1}\) and the
persistent environment \(S_{t+1}\) recursively as follows:

\begin{equation}\phantomsection\label{eq-fcms-chain}{(\mathbf{x}_t,S_t)\to L_t^{\text{global}}\to \mathbf{G}_t\to
(\mathbf{x}_{t+1},S_{t+1}). \tag{4}}\end{equation}

The closed-loop dynamics is described by four operators

\begin{equation}\phantomsection\label{eq-fcms-operators}{\mathbf{x}_{t+1}=f_i(\mathbf{x}_t,\mathbf{G}_t,S_t), \quad
S_{t+1}=\Psi(S_t,\mathbf{x}_t), \quad L_t^{\text{global}}=
\mathcal{A}(\mathbf{x}_t,S_t), \quad \mathbf{G}_t=\Phi(L_t^{\text{global}},
\mathbf{x}_t,S_t) \tag{5}}\end{equation}

where \(f_i\) maps current agent states, incentive signals and
environmental state into future ones; \(\Psi\) accumulates agent
activity and environmental state into the next persistent state;
\(\mathcal{A}\) projects the joint configuration \((\mathbf{x}_t,S_t)\)
onto the scalar global coordination signal \(L_t^{\text{global}}\); and
the incentive distribution operator
\(\Phi: \mathbb{R} \times \mathcal{X} \times
\mathcal{S} \rightarrow \mathbb{R}^{N}\) translates the global
coordination signal \(L_t^{\text{global}}\) into local directional
pressures experienced by each agent. Following FCMS \citep[Section
2.3]{grassi2026fcms}, \(\Phi\) must distribute the global signal
locally, must be non-conservative with respect to \(\mathcal{X}\), and
must be continuous. This operator is instantiated by the minimal linear
specification \(\Phi(S_t,L_t^{\text{global}},
\mathbf{x}_i)=\alpha_1 S_t\mathbf{x}_i+\alpha_2\mathbf{L}_t\mathbf{x}_i\),
where \(\mathbf{L}_t\) is the network topology defined in
Section~\ref{sec-network-layer} and \(\alpha_1,\alpha_2>0\) are coupling
parameters. The system yields four structural results: (i) under
dissipativity there exists a bounded forward-invariant set \citep[Prop.
A.1.1]{grassi2026fcms}; (ii) memory-dependent incentives generically
prevent reduction to a static optimization over agent states
\citep[Prop. A.2.1]{grassi2026fcms}; (iii) persistent environmental
memory transmits initial differences forward in time, inducing history
sensitivity \citep[Prop. A.3.1]{grassi2026fcms}; (iv) bidirectional
coupling between incentives and environmental memory is a necessary
condition for adaptive coordination \citep[Prop. A.4.1]{grassi2026fcms}.
The system remains locally asymptotically stable if and only if

\begin{equation}\phantomsection\label{eq-fcms-discrete-stability}{4\eta\beta^2<\gamma \tag{6}}\end{equation}

above which the system undergoes a Neimark-Sacker bifurcation
\citep{kuznetsov2004}. While the framework establishes the architectural
backbone of closed-loop coordination, the operators \(f_i\) and \(\Psi\)
are defined axiomatically; their continuous-time instantiation
connecting agent optimization and network topology has yet to be
explored.

\section{Theoretical Framework}\label{sec-theoretical-framework}

\subsection{System Configuration}\label{sec-system-configuration}

\subsubsection{Agent Layer}\label{sec-agent-layer}

Define the agent state vector \(\mathbf{x}_i(t) \in \mathcal{X}_i
\subseteq \mathbb{R}^{d_i}\) in continuous time and \(\mathbf{X}(t) =
(\mathbf{x}_1,\dots,\mathbf{x}_N)^T \in \mathbb{R}^{N\times d}\) the
collective state, where \(d = \sum_i d_{i}\).

\subsubsection{Network Layer}\label{sec-network-layer}

Define the time-varying graph \(\mathcal{G}_t = (\mathcal{V},
\mathcal{E}_t, \mathbf{W}_t)\) \citep{newman2010, mesbahi2010}, where
the node set \(\mathcal{V} = \{1,\dots,N\}\) represents the agent set,
\(\mathcal{E}_t \subseteq \mathcal{V} \times \mathcal{V}\) the active
edge set at time \(t\) and \(\mathbf{W}_t \in \mathbb{R}^{N\times N}\)
the adjacency weight matrix with \(w_{ij}(t) \geq 0\), the edge weight
between agents \(i\) and \(j\). Define
\(\mathbf{D}_t \in \mathbb{R}^{N\times N}\) the diagonal degree matrix,
where \(\mathbf{D}_{ii} = \sum_j w_{ij}\) and
\(\mathbf{L}_t \in \mathbb{R}^{N\times N}\) the graph Laplacian
\(\mathbf{D}_t - \mathbf{W}_t\) \citep{olfati2007}.

\subsubsection{Environment Layer}\label{sec-environment-layer}

Define \(S_t\in\mathbb{R}\) as the environmental memory state, retained
as a scalar consistent with the FCMS linear specification
\citep[Appendix A.5]{grassi2026fcms}. The general formulation admits
\(S_t\in\mathcal{S}\subseteq\mathbb{R}^m\); the scalar case is used here
for analytical tractability.

\subsubsection{Full System State}\label{sec-full-system-state}

Merging the layers, the full system state is defined as
\((\mathbf{X}(t), \mathbf{W}_t, S_t)\).

\subsection{Agent Continuous-Time
Lifting}\label{sec-agent-continuous-time-lifting}

To instantiate \(f_i\), substitute the MBI update \(\mathbf{x}_{i,t+1} =
\mathbf{x}_{i,t} + \eta G_{i,t}\) into the FCMS discrete map
\(T(\mathbf{x}_t,S_t)\), which yields in the limit \(\Delta t\to 0\),

\begin{equation}\phantomsection\label{eq-agent-ode}{\dot{\mathbf{x}}_i = -\eta_i \nabla_{\mathbf{x}_i} L_i\!\left(
\mathbf{x}_i,\mathbf{X}_{-i}; \Phi(S_t,\mathbf{L}_t,\mathbf{x}_i)
\right) + \xi_i. \tag{7}}\end{equation}

This is the continuous-time limit of the MBI Theorem 4 convergence
dynamics \citep[Appendix B.6]{grassi2025mbi}, where the discrete
gradient update becomes a continuous gradient flow. Here, \(\eta_i>0\)
is the agent learning rate governing the speed of gradient descent,
\(\nabla_{\mathbf{x}_i}\mathcal{L}_i(\mathbf{x}_i,\mathbf{X}_{-i};\Phi)\)
is the gradient of agent \(i\)'s local loss with respect to its own
state \(\mathbf{x}_i\), holding the states of all other agents
\(\mathbf{X}_{-i}\) fixed, and evaluated under the incentive field
\(\Phi(S_t,\mathbf{L}_t,\mathbf{x}_i)=\alpha_1 S_t\mathbf{x}_i+
\alpha_2\mathbf{L}_t\mathbf{x}_i\). This coupling is what distinguishes
Equation~\ref{eq-agent-ode} from a standard gradient flow: the loss
landscape itself evolves through the feedback loop. This follows from
substituting the discrete update rule:

\[\mathbf{x}_{i,t+1} = \mathbf{x}_{i,t} - \eta_i \nabla_{\mathbf{x}_i}
L_i\!\left(\mathbf{x}_{i,t},\mathbf{X}_{-i,t};
\Phi(S_t,\mathbf{L}_t,\mathbf{x}_{i,t})\right)\Delta t + \xi_i \Delta t,\]

which implies

\[\frac{\mathbf{x}_{i,t+1}-\mathbf{x}_{i,t}}{\Delta t} =
-\eta_i \nabla_{\mathbf{x}_i} L_i\!\left(\mathbf{x}_{i,t},
\mathbf{X}_{-i,t};\Phi(S_t,\mathbf{L}_t,\mathbf{x}_{i,t})\right)
+ \xi_i.\]

Taking the limit \(\Delta t\to 0\) yields

\[\lim_{\Delta t\to 0}\frac{\mathbf{x}_{i,t+1}-\mathbf{x}_{i,t}}
{\Delta t} = \dot{\mathbf{x}}_i,\]

and therefore

\[\dot{\mathbf{x}}_i = -\eta_i \nabla_{\mathbf{x}_i} L_i\!\left(
\mathbf{x}_i,\mathbf{X}_{-i};\Phi(S_t,\mathbf{L}_t,\mathbf{x}_i)
\right) + \xi_i.\]

\subsection{Network Layer Insertion}\label{sec-network-layer-insertion}

Define \(\Psi\) with CMGP's continuous spatial imprinting dynamics
(Equation~\ref{eq-memory-field}), which in the discrete multi-agent
graph setting becomes

\begin{equation}\phantomsection\label{eq-network-ode}{\dot{w}_{ij} = -\gamma w_{ij} + \psi(\mathbf{x}_i,\mathbf{x}_j)
\mathbb{I}_{ij}. \tag{8}}\end{equation}

Here, CMGP's \(\alpha_s\) becomes the edge memory decay rate \(\gamma\),
\(G_{\sigma}\) becomes the Gaussian edge kernel
\(\psi(\mathbf{x}_i,\mathbf{x}_j):=\exp\!\left(-\frac{\|\mathbf{x}_i-
\mathbf{x}_j\|^2}{2\sigma^2}\right)\), the continuous memory field
\(S(\mathbf{r},t)\) becomes the discrete weighted interaction graph
\(\mathbf{W}_t=[w_{ij}]\), and CMGP's single particle trajectory
\(\mathbf{r}(t)\) becomes the population of \(N\) strategic agents
\(\{\mathbf{x}_1,\ldots,\mathbf{x}_N\}\); \(\mathbb{I}_{ij}\in\{0,1\}\)
is the edge indicator equal to \(1\) if \((i,j)\in\mathcal{E}_t\).
Unlike CMGP's single physical particle, which moves under gradient
forcing alone \citep{sarkar2025memory}, each agent \(i\) in the present
framework follows an incentive-aligned gradient flow governed by the
DPM, requiring MBI to ensure strategic coherence across the population.
Equation~\ref{eq-network-ode} may therefore be interpreted as the
graph-theoretic discretization of Equation~\ref{eq-memory-field}, where
memory is no longer stored in a continuous spatial field but in evolving
edge weights encoding pairwise interaction history \citep{olfati2007}.
The FCMS environmental accumulation operator \(\Psi\) then closes the
feedback loop through

\begin{equation}\phantomsection\label{eq-env-ode}{\dot{S}_t = -\mu S_t + \beta\sum_{i,j}w_{ij}\,\operatorname{tr}\!
\left(\mathbf{x}_i\mathbf{x}_j^{\top}\right) \tag{9}}\end{equation}

where the environment \(S_t\) accumulates weighted interaction structure
from the evolving graph; the trace \(\operatorname{tr}(\mathbf{x}_i
\mathbf{x}_j^T)\) projects the pairwise outer product onto \(S_t\),
thereby providing the persistent memory state that feeds back into agent
incentives and future coordination dynamics.

\subsection{Complete Coupled System}\label{sec-complete-coupled-system}

Collecting the three layer instantiations, the complete continuous-time
realization of FCMS is governed by the following coupled system of
ordinary differential equations:

\[\dot{\mathbf{x}}_i = -\eta_i \nabla_{\mathbf{x}_i} L_i\!\left(
\mathbf{x}_i,\mathbf{X}_{-i};\Phi(S_t,\mathbf{L}_t,\mathbf{x}_i)
\right) + \xi_i \tag{7}\]

\[\dot{w}_{ij} = -\gamma w_{ij} + \psi(\mathbf{x}_i,\mathbf{x}_j)
\mathbb{I}_{ij} \tag{8}\]

\[\dot{S}_t = -\mu S_t + \beta\sum_{i,j}w_{ij}\,\operatorname{tr}\!
\left(\mathbf{x}_i\mathbf{x}_j^{\top}\right) \tag{9}\]

Table~\ref{tbl-instantiations} summarizes the FCMS continuous-time
instantiations.

\begin{longtable}[]{@{}
  >{\raggedright\arraybackslash}p{(\linewidth - 4\tabcolsep) * \real{0.3333}}
  >{\raggedright\arraybackslash}p{(\linewidth - 4\tabcolsep) * \real{0.3333}}
  >{\raggedright\arraybackslash}p{(\linewidth - 4\tabcolsep) * \real{0.3333}}@{}}
\caption{FCMS Continuous-Time
Instantiations}\label{tbl-instantiations}\tabularnewline
\toprule\noalign{}
\begin{minipage}[b]{\linewidth}\raggedright
FCMS Operator
\end{minipage} & \begin{minipage}[b]{\linewidth}\raggedright
Continuous Instantiation
\end{minipage} & \begin{minipage}[b]{\linewidth}\raggedright
Source
\end{minipage} \\
\midrule\noalign{}
\endfirsthead
\toprule\noalign{}
\begin{minipage}[b]{\linewidth}\raggedright
FCMS Operator
\end{minipage} & \begin{minipage}[b]{\linewidth}\raggedright
Continuous Instantiation
\end{minipage} & \begin{minipage}[b]{\linewidth}\raggedright
Source
\end{minipage} \\
\midrule\noalign{}
\endhead
\bottomrule\noalign{}
\endlastfoot
\(f_i(\mathbf{x}_{i,t}, G_{i,t}, S_t)\) &
\(\dot{\mathbf{x}}_i = -\eta_i \nabla_{\mathbf{x}_i} L_i + \xi_i\) & MBI
Theorem 4 \citep{grassi2025mbi} \\
\(\Psi(S_t, \mathbf{x}_t)\) &
\(\dot{w}_{ij} = -\gamma w_{ij} + \psi(\mathbf{x}_i,\mathbf{x}_j)\,\mathbb{I}_{ij}\)
& Adapted from CMGP \citep{sarkar2025memory} \\
\(\Phi(L_t^{\text{global}}, \mathbf{x}_t, S_t)\) &
\(\mathbf{G}_t = \alpha_1 S_t \mathbf{x}_i + \alpha_2 \mathbf{L}_t \mathbf{x}_i\)
& FCMS Axioms 1--3 \citep{grassi2026fcms} \\
\(\mathcal{A}(\mathbf{x}_t, S_t)\) &
\(\beta \sum_{i,j} w_{ij}\,\mathbf{x}_i\mathbf{x}_j^{\top}\) & FCMS +
CMGP \citep{grassi2026fcms, sarkar2025memory} \\
\(T(\mathbf{x}_t, S_t)\) & Full coupled ODE system & This paper \\
\end{longtable}

Each row identifies the abstract FCMS operator, its continuous-time
instantiation derived in Section~\ref{sec-agent-continuous-time-lifting}
and Section~\ref{sec-network-layer-insertion}, and the source framework
providing the functional form. The final row, \(T(\mathbf{x}_t, S_t)\),
denotes the complete closed-loop transition operator whose stability
properties are analyzed in Section~\ref{sec-stability-analysis}.

\subsection{Architectural
Interpretation}\label{sec-architectural-interpretation}

The three-layer architecture admits the following interpretation: (i)
the agent layer reflects bounded rational decision-making under
incentive alignment: each agent follows a gradient flow shaped by the
DPM incentive signal, ensuring local optimization remains compatible
with the global objective. (ii) The network layer captures institutional
and relational memory: edge weights evolve as a function of behavioral
proximity, recording the history of agent interactions as a decaying
imprint on the graph topology. (iii) The environmental layer encodes
macro-level coordination pressure: the scalar state \(S_t\) accumulates
the collective topological energy of the graph and feeds it back through
the incentive field that shapes future agent decisions.

\section{Stability Analysis and Main
Theorem}\label{sec-stability-analysis}

\subsection{Lyapunov Candidate}\label{sec-lyapunov-candidate}

Define \(V(t) \in \mathbb{R}^+\), the Lyapunov candidate function which
represents the total energetic load across all three layers as

\begin{equation}\phantomsection\label{eq-lyapunov}{V(t) = \frac{1}{2}\sum_i\|\mathbf{x}_i\|^2 + \frac{1}{2}
\|\mathbf{W}_t\|_F^2 + \frac{1}{2}S_t^2 \tag{10}}\end{equation}

where \(\sum_i\|\mathbf{x}_i\|^2\) represents the total agent kinetic
energy, \(\|\mathbf{W}_t\|_F^2\) the total network structural energy,
and \(S_t^2\) the environmental memory energy. The system inherits the
regularity conditions from MBI \citep[Appendix B.1]{grassi2025mbi},
where (i) \(\mathcal{L}^{\text{global}}\) is \(C^2\) and bounded below,
(ii) \(C_i(\mathbf{x}_i)\) is strictly convex, (iii) gradients are
Lipschitz continuous, and (iv) strong convexity constant \(c\) satisfies
\(\mathbf{x}_i^T\nabla_{\mathbf{x}_i}\mathcal{L}_i \geq c\|\mathbf{x}_i\|^2\).
Building on the forward-invariance result of FCMS \citep[Proposition
A.1.1]{grassi2026fcms}, I compute \(\dot{V}(t)\) along trajectories of
the coupled system
Equation~\ref{eq-agent-ode}--Equation~\ref{eq-env-ode}.

\subsection{Preliminary Lemma}\label{sec-preliminary-lemma}

\begin{lemma}[]\protect\hypertarget{lem-gaussian}{}\label{lem-gaussian}

Let \(\psi(\mathbf{x}_i,\mathbf{x}_j)=\exp\!\left(-\frac{\|\mathbf{x}_i-
\mathbf{x}_j\|^2}{2\sigma^2}\right)\). Then for all
\(\mathbf{x}_i,\mathbf{x}_j\in\mathbb{R}^{d}\),

\[0 \leq \psi(\mathbf{x}_i,\mathbf{x}_j) \leq 1\]

and consequently \(\|\Psi(\mathbf{x})\|_F^2 \leq N^2\).

\end{lemma}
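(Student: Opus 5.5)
The first claim follows from elementary properties of the exponential. For any $\mathbf{x}_i,\mathbf{x}_j\in\mathbb{R}^d$ and $\sigma>0$, the exponent $-\|\mathbf{x}_i-\mathbf{x}_j\|^2/(2\sigma^2)$ is a real number in $(-\infty,0]$, because the squared Euclidean norm is nonnegative. Since $\exp$ is strictly positive and increasing on $\mathbb{R}$, it maps $(-\infty,0]$ into $(0,1]$. Hence $0<\psi(\mathbf{x}_i,\mathbf{x}_j)\le 1$, which is slightly stronger than the stated $0\le\psi\le 1$. Equality $\psi=1$ holds exactly when $\mathbf{x}_i=\mathbf{x}_j$, so the upper bound is attained on the diagonal and cannot be improved.

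For the Frobenius bound, I read $\Psi(\mathbf{x})$ as the $N\times N$ matrix of imprinting terms that drives Equation~\ref{eq-network-ode}, with entries $[\Psi(\mathbf{x})]_{ij}=\psi(\mathbf{x}_i,\mathbf{x}_j)\,\mathbb{I}_{ij}$. Since $\mathbb{I}_{ij}\in\{0,1\}$, the first part gives $0\le[\Psi(\mathbf{x})]_{ij}\le 1$, so each squared entry is at most $1$. Summing over the $N^2$ index pairs gives
\[
\|\Psi(\mathbf{x})\|_F^2=\sum_{i,j}\psi(\mathbf{x}_i,\mathbf{x}_j)^2\,\mathbb{I}_{ij}\le \sum_{i,j}1=N^2 .
\]
This bound holds for any edge set $\mathcal{E}_t$ and does not depend on $\mathbf{X}(t)$. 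That independence is what makes the lemma useful later: it gives a state-independent bound on the forcing term in the $\tfrac12\|\mathbf{W}_t\|_F^2$ part of the Lyapunov derivative.

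The only real obstacle is notational. The symbol $\Psi$ is overloaded in the paper: it denotes both the abstract FCMS environmental operator and, here, the matrix of Gaussian deposits. I would state the entrywise interpretation above explicitly at the start of the proof. I would also remark that if self-loops are excluded ($\mathbb{I}_{ii}=0$), the same argument gives the sharper bound $N(N-1)$. Beyond fixing this interpretation, no earlier result is needed; the argument uses only positivity and monotonicity of the exponential.
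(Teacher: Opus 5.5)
Your proof is correct and follows the paper's argument: the exponent is nonpositive, which gives $0\le\psi\le 1$, and summing the $N^2$ entrywise bounds gives $\|\Psi(\mathbf{x})\|_F^2\le N^2$. Your explicit inclusion of the edge indicator $\mathbb{I}_{ij}$ in the entries of $\Psi(\mathbf{x})$ is slightly more precise than the paper's proof, which writes $\sum_{i,j}\psi(\mathbf{x}_i,\mathbf{x}_j)^2$, and it matches how $\Psi(\mathbf{x})$ is actually used in Term II, but it does not change the argument.
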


\begin{proof}
The Gaussian kernel satisfies \(\psi \geq 0\) by non-negativity of the
exponential and \(\psi \leq 1\) since \(\exp(-z)\leq 1\) for all
\(z\geq 0\). The Frobenius bound follows since \(\Psi(\mathbf{x})\in
\mathbb{R}^{N\times N}\) and each entry satisfies
\(\psi(\mathbf{x}_i,\mathbf{x}_j)\leq 1\), giving

\[\|\Psi(\mathbf{x})\|_F^2 = \sum_{i,j}\psi(\mathbf{x}_i,\mathbf{x}_j)^2
\leq N^2. \qquad\]
\end{proof}

\subsection{\texorpdfstring{Full Derivation of
\(\dot{V}(t)\)}{Full Derivation of \textbackslash dot\{V\}(t)}}\label{sec-full-derivation-v}

Decompose \(\dot{V}(t)\) into three terms as

\[\dot{V} = \underbrace{\sum_i \mathbf{x}_i^{\top}\dot{\mathbf{x}}_i}_{
\text{Term I}} + \underbrace{\langle\mathbf{W}_t,\dot{\mathbf{W}}_t
\rangle_F}_{\text{Term II}} + \underbrace{\langle S_t,\dot{S}_t\rangle}_{
\text{Term III}}.\]

\subsubsection{Term I --- The Agent Layer}\label{sec-term-I}

Substituting Equation~\ref{eq-agent-ode} gives

\[\sum_i \mathbf{x}_i^{\top}\dot{\mathbf{x}}_i = -\eta \sum_i
\mathbf{x}_i^{\top}\nabla_{\mathbf{x}_i}\mathcal{L}_i\!\left(
\mathbf{x}_i,\mathbf{X}_{-i};\Phi\right) + \sum_i
\mathbf{x}_i^{\top}\xi_i.\]

Applying the strong convexity condition from MBI \citep[Appendix
B.1]{grassi2025mbi}:

\[\mathbf{x}_i^{\top}\nabla_{\mathbf{x}_i}\mathcal{L}_i \geq
c\|\mathbf{x}_i\|^2.\]

The noise term \(\sum_i \mathbf{x}_i^{\top}\xi_i\) is bounded in
expectation. Conclude

\[\text{Term I} \leq -\eta c\sum_i\|\mathbf{x}_i\|^2 + \text{noise}.\]

\subsubsection{Term II --- The Network Layer}\label{sec-term-II}

Substituting Equation~\ref{eq-network-ode} and expanding

\[\langle\mathbf{W}_t,\dot{\mathbf{W}}_t\rangle_F = -\gamma\|\mathbf{W}_t
\|_F^2 + \langle\mathbf{W}_t,\Psi(\mathbf{x})\rangle_F.\]

Apply Young's inequality \citep{hardy1952} with \(\epsilon=\gamma\):

\[\langle\mathbf{W}_t,\Psi\rangle_F \leq \frac{\gamma}{2}\|\mathbf{W}_t
\|_F^2 + \frac{1}{2\gamma}\|\Psi\|_F^2.\]

Bounding \(\|\Psi\|_F^2 \leq N^2\) by Lemma~\ref{lem-gaussian}. Conclude

\[\text{Term II} \leq -\frac{\gamma}{2}\|\mathbf{W}_t\|_F^2 +
\frac{N^2}{2\gamma}.\]

\subsubsection{Term III --- Environment Layer}\label{sec-term-III}

Substitute Equation~\ref{eq-env-ode}. Since \(S_t\in\mathbb{R}\) is
scalar, \(\langle S_t,\dot{S}_t\rangle=S_t\dot{S}_t\) and
\(\|S_t\|^2=S_t^2\),

\[\langle S_t,\dot{S}_t\rangle = S_t\dot{S}_t.\]

Expanding:

\[S_t\dot{S}_t = -\mu S_t^2 + \beta S_t\sum_{i,j}w_{ij}\operatorname{tr}
\!\left(\mathbf{x}_i\mathbf{x}_j^{\top}\right).\]

Let \(G_{ij} = \operatorname{tr}\!\left(\mathbf{x}_i\mathbf{x}_j^{\top}
\right) = \mathbf{x}_i^{\top}\mathbf{x}_j\), so that
\(\sum_{i,j}w_{ij}\operatorname{tr}\!\left(\mathbf{x}_i\mathbf{x}_j^{\top}
\right) = \langle\mathbf{W}_t, G\rangle_F\). Applying the Frobenius
Cauchy--Schwarz inequality:

\[\left|\sum_{i,j}w_{ij}\operatorname{tr}\!\left(\mathbf{x}_i
\mathbf{x}_j^{\top}\right)\right| \leq \|\mathbf{W}_t\|_F\|G\|_F.\]

Since

\[\|G\|_F^2 = \sum_{i,j}\left(\mathbf{x}_i^{\top}\mathbf{x}_j\right)^2
\leq \sum_{i,j}\|\mathbf{x}_i\|^2\|\mathbf{x}_j\|^2 = \left(\sum_i
\|\mathbf{x}_i\|^2\right)^2,\]

it follows that \(\|G\|_F \leq \sum_i\|\mathbf{x}_i\|^2\), and therefore

\[\left|\beta S_t\sum_{i,j}w_{ij}\operatorname{tr}\!\left(\mathbf{x}_i
\mathbf{x}_j^{\top}\right)\right| \leq \beta|S_t|\|\mathbf{W}_t\|_F
\left(\sum_i\|\mathbf{x}_i\|^2\right).\]

Applying Young's inequality \citep{hardy1952} with \(a=|S_t|\) and
\(b=\beta\|\mathbf{W}_t\|_F\sum_i\|\mathbf{x}_i\|^2\):

\[\beta S_t\sum_{i,j}w_{ij}\operatorname{tr}\!\left(\mathbf{x}_i
\mathbf{x}_j^{\top}\right) \leq \frac{\mu}{2}S_t^2 + \frac{\beta^2}
{2\mu}\|\mathbf{W}_t\|_F^2\sum_i\|\mathbf{x}_i\|^2.\]

From Equation~\ref{eq-network-ode}, edge weights satisfy
\(w_{ij}(t)\leq 1/\gamma\) at steady state, since \(\dot{w}_{ij}=0\)
with \(\psi\leq 1\) implies
\(w_{ij}^{*} = \psi(\mathbf{x}_i,\mathbf{x}_j)\mathbb{I}_{ij}/\gamma
\leq 1/\gamma\). Therefore \(\|\mathbf{W}_t\|_F^2 \leq N^2/\gamma^2\).
Substituting into the Term III conclusion:

\[\text{Term III} \leq -\frac{\mu}{2}S_t^2 + \frac{\beta^2 N^2}
{2\mu\gamma^2}\sum_i\|\mathbf{x}_i\|^2.\]

\subsubsection{Collect All Terms}\label{sec-all-terms}

Combining Terms I--III yields

\[\dot{V} \leq -\left(\eta c - \frac{\beta^2 N^2}{2\mu\gamma^2}\right)
\sum_i\|\mathbf{x}_i\|^2 - \frac{\gamma}{2}\|\mathbf{W}_t\|_F^2 -
\frac{\mu}{2}S_t^2 + C.\]

For the coefficient of \(\sum_i\|\mathbf{x}_i\|^2\) to remain negative,
it suffices that \(\eta c > \frac{\beta^2 N^2}{2\mu\gamma^2}\), which
after rearrangement and specialization to \(N=2\), \(c=1\) yields

\[4\beta^2 < 2\eta\mu\gamma^2.\]

This specialization is consistent with the minimal mean-field
specification of FCMS \citep[Appendix A.5]{grassi2026fcms}.

\subsection{Main Theorem}\label{sec-main-theorem}

\begin{theorem}[]\protect\hypertarget{thm-dissipativity}{}\label{thm-dissipativity}

Under regularity conditions inherited from MBI \citep[Assumptions
B.1.2]{grassi2025mbi} and the forward-invariance condition of FCMS
\citep[Proposition A.1.1]{grassi2026fcms}, the continuous-time system
Equation~\ref{eq-agent-ode}--Equation~\ref{eq-env-ode} is globally
dissipative and all trajectories remain bounded within a
forward-invariant region if

\[4\beta^2 < 2\eta\mu\gamma^2.\]

\end{theorem}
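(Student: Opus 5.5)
The plan is to turn the derivation of Section~\ref{sec-full-derivation-v} into a standard dissipativity argument using the Lyapunov candidate $V$ of Equation~\ref{eq-lyapunov}. There is one gap to fix first. The bound $\|\mathbf{W}_t\|_F^2\le N^2/\gamma^2$ was justified only at steady state, and it has to hold along whole trajectories. For that I will use the comparison principle on Equation~\ref{eq-network-ode}. Since $0\le\psi\mathbb{I}_{ij}\le 1$ by Lemma~\ref{lem-gaussian}, each weight satisfies
\[
w_{ij}(t)\le e^{-\gamma t}w_{ij}(0)+\tfrac{1}{\gamma}(1-e^{-\gamma t}).
\]
The box $\{0\le w_{ij}\le 1/\gamma\}$ is therefore forward invariant, and every trajectory enters $\{w_{ij}\le 1/\gamma+\varepsilon\}$ in finite time. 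Non-negativity also follows, because $\dot w_{ij}\ge 0$ whenever $w_{ij}=0$. I will first restrict to initial data in this box, which is exactly the forward-invariance hypothesis borrowed from \citep[Proposition A.1.1]{grassi2026fcms}. The general case then reduces to this one after a finite transient, with $1/\gamma$ replaced by $1/\gamma+\varepsilon$; the slack is absorbed by the strict inequality in the hypothesis.

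Next I will redo the estimates for Terms I--III on that invariant box. Term I uses assumption (iv), $\mathbf{x}_i^\top\nabla\mathcal{L}_i\ge c\|\mathbf{x}_i\|^2$. Term II uses Young's inequality and Lemma~\ref{lem-gaussian}. Term III uses the Frobenius Cauchy--Schwarz bound together with $\|G\|_F\le\sum_i\|\mathbf{x}_i\|^2$ and the uniform bound on $\|\mathbf{W}_t\|_F$. Collecting terms gives
\[
\dot V\le-\kappa_x\sum_i\|\mathbf{x}_i\|^2-\tfrac{\gamma}{2}\|\mathbf{W}_t\|_F^2-\tfrac{\mu}{2}S_t^2+C,
\qquad
\kappa_x=\eta c-\frac{\beta^2N^2}{2\mu\gamma^2}.
\]
Here $C=N^2/(2\gamma)$ plus a bound on the noise contribution. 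Under the specialization $N=2$, $c=1$ used at the end of Section~\ref{sec-all-terms}, the hypothesis $4\beta^2<2\eta\mu\gamma^2$ is exactly $\kappa_x>0$.

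With $\lambda=2\min\{\kappa_x,\gamma/2,\mu/2\}>0$, this becomes $\dot V\le-\lambda V+C$. Grönwall's inequality then gives
\[
V(t)\le e^{-\lambda t}V(0)+\frac{C}{\lambda}(1-e^{-\lambda t}).
\]
Every sublevel set $\{V\le R\}$ with $R\ge C/\lambda$ is forward invariant, and it absorbs all trajectories. This is global dissipativity, with boundedness following immediately.

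I expect the noise term to be the main obstacle. As written, $\sum_i\mathbf{x}_i^\top\xi_i$ is not bounded pathwise. There are two ways to handle it, and I would try the first one first.
\begin{itemize}
\item Treat $\xi_i$ as a bounded deterministic disturbance, $\|\xi_i\|\le\bar\xi$. Young's inequality then gives $\mathbf{x}_i^\top\xi_i\le\frac{\eta c-\kappa'}{2}\|\mathbf{x}_i\|^2+\dots$, and the resulting loss is absorbed into a slightly smaller $\kappa_x$ by the strict inequality.
\item Interpret $\xi_i$ as white noise and apply Itô's formula to $V$. The martingale term vanishes in expectation, and the Itô correction adds a constant to $C$. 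This yields dissipativity in mean, $\mathbb{E}V(t)\le e^{-\lambda t}V(0)+C/\lambda$.
\end{itemize}
I would state the theorem's conclusion in whichever of these senses matches the paper's intended reading. I would also note explicitly that for general $N$ and $c$ the correct condition is $\eta c>\beta^2N^2/(2\mu\gamma^2)$.
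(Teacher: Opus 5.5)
Your plan breaks at the step ``collecting terms gives $\dot V\le-\kappa_x\sum_i\|\mathbf{x}_i\|^2-\dots+C$.'' The Term III cross term is bounded by $\beta|S_t|\,\|\mathbf{W}_t\|_F\sum_i\|\mathbf{x}_i\|^2$, which is \emph{cubic} in the state. Young's inequality with $a=|S_t|$ and $b=\beta\|\mathbf{W}_t\|_F\sum_i\|\mathbf{x}_i\|^2$ therefore gives
\[
\beta S_t\langle\mathbf{W}_t,G\rangle_F\le\frac{\mu}{2}S_t^2+\frac{\beta^2}{2\mu}\|\mathbf{W}_t\|_F^2\Big(\sum_i\|\mathbf{x}_i\|^2\Big)^2 ,
\]
which is quartic in $\mathbf{x}$. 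The display in Section~\ref{sec-term-III} drops this square. The paper's one-line proof rests on that display, and your proposal inherits the slip.

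With the correct exponent, the $\mathbf{x}$-part $-\eta c\sum_i\|\mathbf{x}_i\|^2+\frac{\beta^2N^2}{2\mu\gamma^2}(\sum_i\|\mathbf{x}_i\|^2)^2$ is negative only on the bounded set $\sum_i\|\mathbf{x}_i\|^2<2\eta c\mu\gamma^2/(\beta^2N^2)$. The condition $\kappa_x>0$ only says that this set contains the closed unit ball.

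This is not merely a lossy estimate: $\dot V\le-\lambda V+C$ is false for the unweighted $V$. Take the following point of your invariant box:
\begin{itemize}
\item $\nabla_{\mathbf{x}_i}\mathcal{L}_i=c\,\mathbf{x}_i$ (so (iv) holds with equality) and $\xi_i=0$;
\item all $\mathbf{x}_i=r\mathbf{e}$ with $\|\mathbf{e}\|=1$;
\item $w_{ij}=\mathbb{I}_{ij}/\gamma$ with at least one active edge, and $K=\sum_{i,j}w_{ij}>0$;
\item $S_t=\beta Kr^2/(2\mu)$.
\end{itemize}
Then Term II vanishes, Term III equals $\beta^2K^2r^4/(4\mu)$, and
\[
\dot V=-\eta cNr^2+\frac{\beta^2K^2}{4\mu}r^4,
\]
while $V=O(r^4)$. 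No $\lambda>0$ and $C$ can work as $r\to\infty$. This holds for every $\beta>0$, whether or not $4\beta^2<2\eta\mu\gamma^2$. Your Gr\"onwall step and absorbing-set conclusion therefore have no valid input.

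The repair uses the fact that assumption (iv) controls Term I uniformly in $(S_t,\mathbf{W}_t)$: add a quartic term, $V_1=V+\frac14\big(\sum_i\|\mathbf{x}_i\|^2\big)^2$. Its extra derivative is $\sum_i\|\mathbf{x}_i\|^2\cdot\sum_i\mathbf{x}_i^\top\dot{\mathbf{x}}_i\le-\eta c\big(\sum_i\|\mathbf{x}_i\|^2\big)^2+\text{noise}$. So on your box
\[
\dot V_1\le-\eta c\sum_i\|\mathbf{x}_i\|^2-\kappa_x\Big(\sum_i\|\mathbf{x}_i\|^2\Big)^2-\frac{\gamma}{2}\|\mathbf{W}_t\|_F^2-\frac{\mu}{2}S_t^2+C ,
\]
and $\kappa_x>0$ is exactly what is needed. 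The rest of your argument then goes through:
\begin{itemize}
\item your comparison bound on $\mathbf{W}_t$, which correctly replaces the paper's steady-state-only justification;
\item Gr\"onwall with $\lambda=\min\{2\eta c,4\kappa_x,\gamma,\mu\}$;
\item your noise treatment, since the extra noise and It\^o terms are of lower order than the quartic dissipation.
\end{itemize}

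Be aware, though, that weighting the quartic term by any $a$ with $a\eta c>\beta^2N^2/(2\mu\gamma^2)$ works equally well. A cascade argument also gives the same conclusion with no restriction on $\beta$: first bound $\mathbf{x}$ alone via (iv), then $\mathbf{W}_t$ via your comparison bound, then $S_t$ as a stable linear filter of a bounded input. Under the stated hypotheses, the threshold is a property of the chosen certificate, not of the dynamics.
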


\begin{proof}
Under the stability condition \(4\beta^2 < 2\eta\mu\gamma^2\), the
coefficient of each term in \(\dot{V}\) is negative. Standard Lyapunov
comparison arguments \citep[Theorem 4.18]{khalil2002} then yield
\(\dot{V} \leq -\delta V + C\) for some \(\delta > 0\) and bounded
remainder \(C\), implying ultimate boundedness.
\end{proof}

\begin{refremark}
This condition generalizes the discrete-time FCMS stability condition
\(4\eta\beta^2 < \gamma\) \citep{grassi2026fcms} by incorporating the
environmental decay rate \(\mu\) and the edge decay rate \(\gamma\) into
the continuous-time dissipation product, yielding \(4\beta^2 <
2\eta\mu\gamma^2\). Notably, \(\gamma\) now appears squared on the
right-hand side, meaning edge memory decay contributes quadratically to
stability in the continuous-time system. The general \(N\)-agent
stability condition \(\beta^2 < \frac{2\eta c\mu\gamma^2}{N^2}\)
established in the derivation above suggests a corresponding Hopf
bifurcation threshold for arbitrary population sizes; formal
verification is left for future work. The structural stability of the
discrete-time analogue under nonlinear perturbations is established in
FCMS \citep[Appendix A.7]{grassi2026fcms}; the continuous-time analogue
follows from standard hyperbolic fixed-point theory \citep[Theorem
4.18]{khalil2002}.

\label{rem-generalization}

\end{refremark}

\subsection{Bifurcation
Correspondence}\label{sec-bifurcation-correspondence}

The stability thresholds of CMGP and FCMS share a common structure:
stability is maintained when dissipation dominates feedback
amplification. Table~\ref{tbl-bifurcation} summarizes the stability
conditions across the three related systems:

\begin{longtable}[]{@{}
  >{\raggedright\arraybackslash}p{(\linewidth - 4\tabcolsep) * \real{0.3333}}
  >{\raggedright\arraybackslash}p{(\linewidth - 4\tabcolsep) * \real{0.3333}}
  >{\raggedright\arraybackslash}p{(\linewidth - 4\tabcolsep) * \real{0.3333}}@{}}
\caption{Bifurcation Correspondence Across
Systems}\label{tbl-bifurcation}\tabularnewline
\toprule\noalign{}
\begin{minipage}[b]{\linewidth}\raggedright
System
\end{minipage} & \begin{minipage}[b]{\linewidth}\raggedright
Stable Condition
\end{minipage} & \begin{minipage}[b]{\linewidth}\raggedright
Unstable When
\end{minipage} \\
\midrule\noalign{}
\endfirsthead
\toprule\noalign{}
\begin{minipage}[b]{\linewidth}\raggedright
System
\end{minipage} & \begin{minipage}[b]{\linewidth}\raggedright
Stable Condition
\end{minipage} & \begin{minipage}[b]{\linewidth}\raggedright
Unstable When
\end{minipage} \\
\midrule\noalign{}
\endhead
\bottomrule\noalign{}
\endlastfoot
Continuous-Time CMGP \citep{sarkar2025memory} &
\(\alpha_s > \alpha_c = 1/K\) & Feedback curvature exceeds memory
decay \\
Discrete-Time FCMS \citep{grassi2026fcms} & \(4\eta\beta^2 < \gamma\) &
Coupling gain exceeds dissipation \\
Continuous-Time FCMS (this paper) & \(4\beta^2 < 2\eta\mu\gamma^2\) &
Feedback gain exceeds dissipation product \\
\end{longtable}

Across all three systems, the underlying principle is the same: memory
dissipation must outpace feedback gain in order to maintain coherent
bounded dynamics. Stability requires that stored coordination history
dissipates fast enough to prevent feedback amplification.

When \(4\beta^2 > 2\eta\mu\gamma^2\), the dissipation channels can no
longer offset feedback-driven amplification. As the threshold is
approached, the system exhibits critical slowing down and increased
state variance. Beyond the threshold, the fixed point loses stability
through a Hopf bifurcation in the continuous-time system
\citep{guckenheimer1983}, generating a limit cycle. This is the
continuous-time analogue of the Neimark-Sacker bifurcation identified in
the discrete FCMS \citetext{\citealp[Appendix
B.5]{grassi2026fcms}; \citealp{kuznetsov2004}}. The structural parallel
with CMGP's threshold \(\alpha_c = 1/K\) confirms that
memory-dissipation competition is a universal organizing principle
across physical and strategic multi-agent systems.

\section{A Minimal Theoretical Simulation}\label{sec-simulation}

\subsection{Simulation Setup}\label{sec-setup}

Numerical simulations are performed with \(N=2\) agents with scalar
state \(d=1\), consistent with the minimal mean-field specification of
FCMS \citep[Appendix A.5]{grassi2026fcms} under which the clean
stability condition \(4\beta^2 < 2\eta\mu\gamma^2\) is derived. Agents
are initialized at fixed symmetric positions \(\mathbf{x}_1(0)=2\),
\(\mathbf{x}_2(0)=-2\), establishing an initial coordination gap
\(d_0=x_1-x_2=4\). The coupled system
Equation~\ref{eq-agent-ode}--Equation~\ref{eq-env-ode} is integrated
using a fourth-order Runge-Kutta scheme with step size
\(\Delta t=0.005\) over time horizon \(T=10\). Two parameter regimes are
compared, distinguished solely by the coupling gain \(\beta\), with all
other parameters held fixed, isolating the effect of feedback strength
on long-run stability. Table~\ref{tbl-parameters} summarizes all
parameter choices.

\begin{longtable}[]{@{}llll@{}}
\caption{Simulation Parameter
Choices}\label{tbl-parameters}\tabularnewline
\toprule\noalign{}
Parameter & Stable & Unstable & Role \\
\midrule\noalign{}
\endfirsthead
\toprule\noalign{}
Parameter & Stable & Unstable & Role \\
\midrule\noalign{}
\endhead
\bottomrule\noalign{}
\endlastfoot
\(\eta\) & 0.5 & 0.5 & Agent learning rate \\
\(\beta\) & 0.1 & 3.0 & Coupling gain \\
\(\gamma\) & 2.0 & 2.0 & Edge decay rate \\
\(\mu\) & 2.0 & 2.0 & Environmental decay rate \\
\(\sigma\) & 2.0 & 2.0 & Gaussian kernel width \\
\(\alpha_1,\alpha_2\) & 1.0 & 1.0 & Incentive field parameters \\
\(4\beta^2\) & 0.04 & 36.0 & Feedback gain \\
\(2\eta\mu\gamma^2\) & 8.0 & 8.0 & Dissipation product \\
Condition \(4\beta^2 < 2\eta\mu\gamma^2\) & \(\checkmark\) & \(\times\)
& \\
\end{longtable}

Large-scale numerical validation with \(N=10^6\) agents confirming
mean-field convergence consistent with FCMS \citep[Appendix
B.7]{grassi2026fcms} is available in the accompanying repository at
\texttt{github.com/stevefatz95/fcms-continuous}.

\subsection{Stable Regime}\label{sec-stable-regime}

Figure~\ref{fig-vt} shows that under the stable parameterization
\(V(t)\) decreases monotonically from \(V(0)=4.125\) to \(V(T)=0.25\), a
reduction by a factor of approximately sixteen over the simulation
horizon. The decay is smooth and consistent with the ultimate
boundedness condition \(\dot{V}\leq-\delta V+C\) established in
Theorem~\ref{thm-dissipativity}. The dissipation channels \(\gamma\) and
\(\mu\) jointly suppress the Lyapunov energy across all three layers
simultaneously, as the three-term decomposition of
Section~\ref{sec-full-derivation-v} predicts. Figure~\ref{fig-edges}
(left panel) reveals the network-level mechanism underlying this
convergence. Edge weights begin near zero at early time \(t=0.25\) and
concentrate near \(w_{ij}^{*}\approx0.5\) at \(t=T\), consistent with
the steady-state prediction
\(w_{ij}^{*}=\psi(\mathbf{x}_i,\mathbf{x}_j)/\gamma\) from
Equation~\ref{eq-network-ode}. As the incentive field \(\mathbf{G}_t\)
aligns agent behavior under the DPM, pairwise distances
\(|\mathbf{x}_i-\mathbf{x}_j|\) decrease, the Gaussian kernel
\(\psi(\mathbf{x}_i,\mathbf{x}_j)\) strengthens, and the network builds
durable coordination structure. Edge weight concentration thus records
the collapse of initial agent dispersion into a stable network
structure.

\begin{figure}[H]

\centering{

\includegraphics[width=0.9\linewidth,height=\textheight,keepaspectratio]{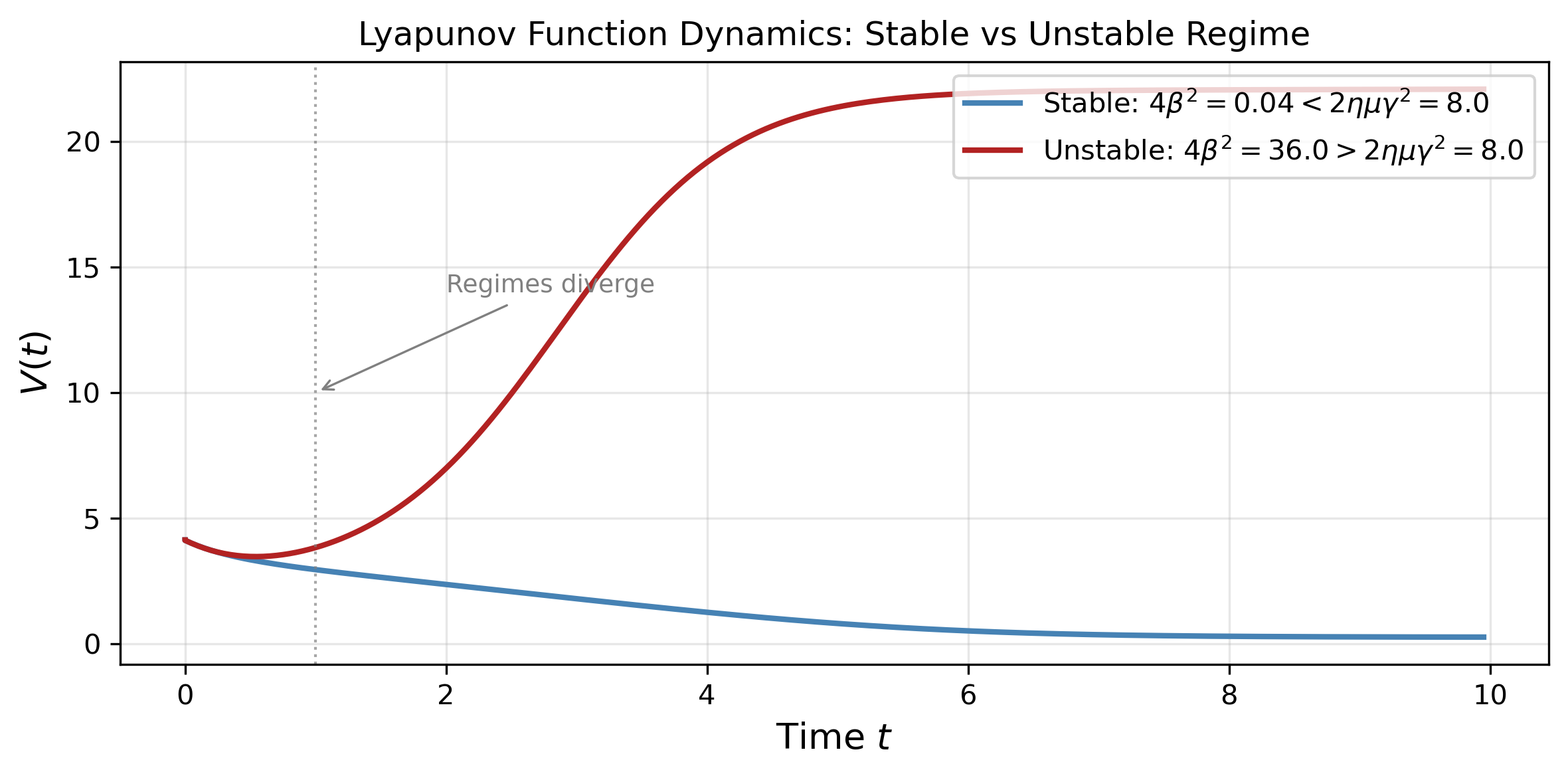}

}

\caption{\label{fig-vt}Lyapunov function \(V(t)\) under stable (blue,
\(4\beta^2=0.04<2\eta\mu\gamma^2=8.0\)) and unstable (red,
\(4\beta^2=36.0>2\eta\mu\gamma^2=8.0\)) parameterizations with \(N=2\)
agents. The stable regime converges monotonically from \(V(0)=4.125\) to
\(V(T)=0.25\), consistent with Theorem~\ref{thm-dissipativity}. The
unstable regime rises to \(V(T)=22.09\) and plateaus in an elevated
oscillatory state, consistent with the predicted Hopf instability
\citep{guckenheimer1983}.}

\end{figure}%

\subsection{Unstable Regime}\label{sec-unstable-regime}

When the stability condition is violated, \(V(t)\) rises from
\(V(0)=4.125\) to \(V(T)=22.09\) and remains elevated throughout the
simulation horizon, as shown in Figure~\ref{fig-vt}. The system never
enters a dissipative phase. Instead the feedback loop amplifies
perturbations, producing sustained energy growth consistent with the
loss of fixed-point stability predicted by the Hopf bifurcation analysis
of Section~\ref{sec-bifurcation-correspondence}. Figure~\ref{fig-edges}
(right panel) reveals the structural mechanism underlying this
instability. Edge weights begin forming at early time but collapse
toward zero by \(t=1.5\), as agents diverge under amplified feedback. As
pairwise distances \(|\mathbf{x}_i-\mathbf{x}_j|\) grow, the Gaussian
kernel \(\psi(\mathbf{x}_i,\mathbf{x}_j)\rightarrow0\), starving the
environmental accumulation term \(\dot{S}_t\) of input and collapsing
the coordination structure that sustains the incentive field. This
self-reinforcing cascade --- diverging agents, decaying edges, weakening
incentives --- is the continuous-time analogue of the coordination
breakdown identified numerically in FCMS \citep[Appendix
B.5]{grassi2026fcms}. The qualitative correspondence between the
continuous-time simulation and the discrete FCMS numerical analysis
confirms that the dissipative-feedback mechanism governing coordination
is structurally stable across discrete and continuous time, consistent
with the structural stability of smooth perturbations of hyperbolic
fixed points \citep[Theorem 4.18]{khalil2002}. A full numerical
characterization of the Hopf bifurcation in the continuous-time system,
including the critical threshold curve in parameter space, is left for
future work.

\begin{figure}[H]

\centering{

\includegraphics[width=1\linewidth,height=\textheight,keepaspectratio]{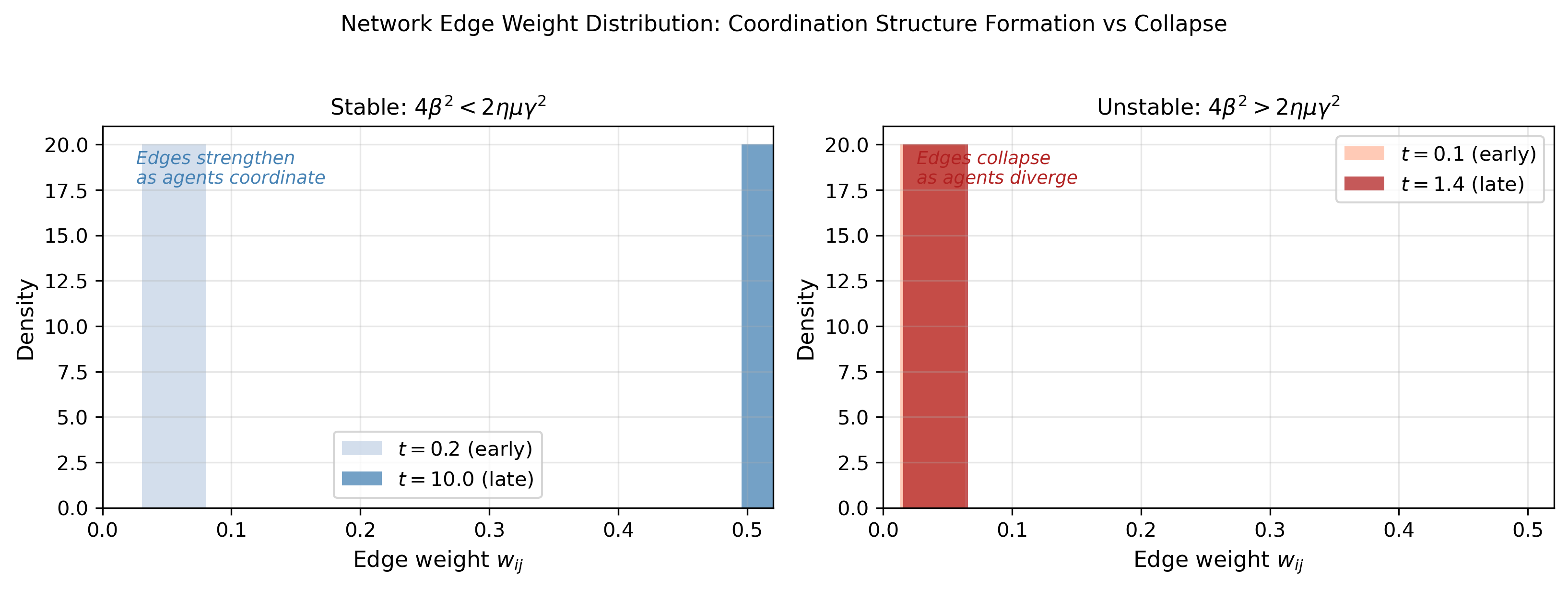}

}

\caption{\label{fig-edges}Network edge weight distribution \(w_{ij}\) at
early time (light) and late time (dark) under stable (left,
\(4\beta^2<2\eta\mu\gamma^2\)) and unstable (right,
\(4\beta^2>2\eta\mu\gamma^2\)) regimes. Under the stable
parameterization edge weights strengthen and concentrate near
\(w_{ij}^{*}\approx\psi/\gamma\), reflecting network coordination
structure formation. Under the unstable parameterization edge weights
collapse toward zero as agents diverge, reflecting coordination
structure failure. The contrast confirms the network-level mechanism
underlying Theorem~\ref{thm-dissipativity}.}

\end{figure}%

\section{Discussion}\label{sec-discussion}

The continuous-time realization of FCMS demonstrates that none of the
single component frameworks could establish the joint stability result
alone. MBI \citep{grassi2025mbi} lacks environmental memory and operates
in discrete iterations over a fixed D-DAG. CMGP \citep{sarkar2025memory}
focuses on a single physical particle without strategic agency or
incentive alignment. The discrete FCMS \citep{grassi2026fcms}, while
laying the architectural groundwork, cannot prove continuous-time
stability. The unified continuous-time system provides a single
computable stability criterion \(4\beta^2 < 2\eta\mu\gamma^2\) governing
coordination across all three layers simultaneously. The threshold is
observable and computable from system parameters, making it a diagnostic
tool for coordination breakdown in social, economic, and multi-agent AI
systems.

The continuous-time FCMS is the macro-scale strategic analogue of
Sarkar's memory engine \citep{sarkar2025memory}, which shows coherence
emerging from coupling in a single physical particle. This paper shows
the same principle operating in a population of \(N\) strategic agents.
The key distinction is that in Sarkar's framework coherence is physical
and driven by gradient forcing alone, while here coherence is economic
and requires explicit incentive alignment via the MBI DPM
\citep{grassi2025mbi}. This work extends CMGP's bifurcation threshold
\(\alpha_s > \alpha_c = 1/K\) \citep{sarkar2025memory} to the
multi-agent strategic stability criterion
\(4\beta^2 < 2\eta\mu\gamma^2\), revealing an analogous universal
organizing principle across physics and economics: memory dissipation
must outpace feedback gain. Sarkar's memory engine principle is
therefore not limited to physical substrates but extends to any system
where agents interact through a persistent environment.

Several limitations remain. First, the incentive field specification
\(\Phi = \alpha_1 S_t \mathbf{x}_i + \alpha_2 \mathbf{L}_t \mathbf{x}_i\)
is the minimal linear form satisfying FCMS Axioms 1--3 \citep[Section
2.3]{grassi2026fcms}, chosen for analytical tractability; nonlinear
specifications may produce more complex dynamics. Second, the learning
rate \(\eta\) is shared homogeneously across agents; heterogeneous
adaptation rates are left for future analysis. Third, the environmental
state \(S_t \in \mathbb{R}\) is retained as a scalar; the matrix-valued
generalization \(S_t \in \mathbb{R}^{d\times d}\) may represent more
structured institutional memory. Fourth, the clean stability condition
is proved for the minimal \(N=2\) mean-field specification
\citep[Appendix A.5]{grassi2026fcms}; the general \(N\)-agent Hopf
bifurcation structure is conjectured and left for formal verification.
Fifth, no empirical calibration has been provided.

Future work should address four directions. First, extending the
framework to heterogeneous learning rates \(\eta_i\), connecting to
Sarkar's heterogeneous diffusion extension \citep{sarkar2025cmgp} as the
physical parallel. Second, characterizing the class of admissible
nonlinear incentive distribution operators \(\Phi\) beyond the linear
specification. Third, completing the Hopf bifurcation characterization
\citep{guckenheimer1983} for the general \(N\)-agent continuous-time
system including the critical threshold curve in parameter space.
Fourth, empirical calibration connecting parameters \(\eta\), \(\beta\),
\(\gamma\), and \(\mu\) to observable economic or institutional
variables, using the stability threshold \(4\beta^2 < 2\eta\mu\gamma^2\)
as an early warning indicator for coordination breakdown.

\section{Conclusion}\label{sec-conclusion}

This paper develops the continuous-time realization of FCMS
\citep{grassi2026fcms} by instantiating the abstract operators \(f_i\)
and \(\Psi\) via an agent layer defined by MBI \citep{grassi2025mbi} and
a network layer adapted from CMGP \citep{sarkar2025memory} respectively.
The resulting system is globally dissipative and trajectories remain
bounded if the stability condition \(4\beta^2 < 2\eta\mu\gamma^2\) is
satisfied. More specifically, \(\eta\) governs how strongly agents
respond to incentives, \(\beta\) governs how strongly the network feeds
back into the environment, \(\gamma\) governs how quickly edge memory
decays, and \(\mu\) governs how quickly environmental memory decays.
Stability requires the product of the two dissipation channels
\(2\eta\mu\gamma^2\) to outpace the amplified feedback gain
\(4\beta^2\). When the stability condition is violated the system
undergoes a Hopf bifurcation \citep{guckenheimer1983}, the coordination
manifold loses stability, and the self-reinforcing cascade of diverging
agents, decaying edges, and weakening incentives produces coordination
failure. This generalizes both the discrete FCMS condition
\(4\eta\beta^2 < \gamma\) \citep{grassi2026fcms} and CMGP's physical
threshold \(\alpha_c = 1/K\) \citep{sarkar2025memory}, confirming that
memory dissipation must outpace feedback gain as a universal organizing
principle across physical and strategic multi-agent systems. MBI
established the micro-layer, FCMS established the macro-layer
architecture in discrete time, and CMGP established the meso-layer in
continuous physical space. This paper completes the picture by providing
the continuous-time realization that connects all three. The stability
condition \(4\beta^2 < 2\eta\mu\gamma^2\) is not only a theoretical
boundary but a computable early warning criterion \citep{scheffer2009},
enabling empirical validation in economic and institutional systems
where coordination breakdown is observable. In this sense,
\(4\beta^2 < 2\eta\mu\gamma^2\) is the dynamical invisible hand
\citep{smith1776}: not a metaphor for market efficiency, but a
computable structural criterion under which decentralized agents,
responding only to local incentive signals shaped by a persistent
environment and an evolving network, spontaneously achieve collective
order without design.

\section{References}\label{references}

\renewcommand{\bibsection}{}
\bibliography{refs.bib}

@article{hayek1945,
  author    = {Hayek, Friedrich A.},
  title     = {The Use of Knowledge in Society},
  journal   = {American Economic Review},
  year      = {1945},
  volume    = {35},
  number    = {4},
  pages     = {519--530}
}

@article{grassi2025mbi,
  author    = {Grassi, Stefano},
  title     = {Mechanism-Based Intelligence: Differentiable Incentives
               for Rational Coordination and Guaranteed Alignment in
               Multi-Agent Systems},
  journal   = {arXiv preprint},
  year      = {2025},
  eprint    = {2512.20688},
  archivePrefix = {arXiv},
  primaryClass  = {cs.GT},
  url       = {https://arxiv.org/abs/2512.20688}
}

@article{grassi2026fcms,
  author    = {Grassi, Stefano},
  title     = {Feedback-Coupled Memory Systems: A Dynamical Model for
               Adaptive Coordination},
  journal   = {arXiv preprint},
  year      = {2026},
  eprint    = {2603.11560},
  archivePrefix = {arXiv},
  primaryClass  = {cs.MA},
  url       = {https://arxiv.org/abs/2603.11560}
}

@article{sarkar2025cmgp,
  author    = {Sarkar, Aranyak},
  title     = {Non-{M}arkovian {R}oute to {C}oherence in {H}eterogeneous
              {D}iffusive {S}ystems},
  journal   = {Physical Review E},
  volume    = {112},
  number    = {5},
  pages     = {054117},
  year      = {2025},
  month     = {nov},
  publisher = {American Physical Society},
  doi       = {10.1103/6r83-n97h},
  url       = {https://link.aps.org/doi/10.1103/6r83-n97h}
}

@article{sarkar2025memory,
  author    = {Sarkar, Aranyak},
  title     = {Memory Engine: Self-Organized Coherence from Internal Feedback},
  journal   = {Physical Review E},
  volume    = {112},
  number    = {5},
  pages     = {054111},
  year      = {2025},
  month     = {nov},
  publisher = {American Physical Society},
  doi       = {10.1103/t7nk-4p57},
  url       = {https://link.aps.org/doi/10.1103/t7nk-4p57}
}

@book{khalil2002,
  author    = {Khalil, Hassan K.},
  title     = {Nonlinear Systems},
  edition   = {3rd},
  publisher = {Prentice Hall},
  address   = {Upper Saddle River, NJ},
  year      = {2002}
}

@book{hardy1952,
  author    = {Hardy, Godfrey H. and Littlewood, John E. and
               P{\'o}lya, George},
  title     = {Inequalities},
  edition   = {2nd},
  publisher = {Cambridge University Press},
  address   = {Cambridge},
  year      = {1952}
}

@book{kuznetsov2004,
  author    = {Kuznetsov, Yuri A.},
  title     = {Elements of Applied Bifurcation Theory},
  edition   = {3rd},
  publisher = {Springer},
  address   = {New York},
  year      = {2004},
  series    = {Applied Mathematical Sciences},
  volume    = {112}
}

@book{guckenheimer1983,
  author    = {Guckenheimer, John and Holmes, Philip},
  title     = {Nonlinear Oscillations, Dynamical Systems, and
               Bifurcations of Vector Fields},
  publisher = {Springer},
  address   = {New York},
  year      = {1983},
  series    = {Applied Mathematical Sciences},
  volume    = {42}
}

@book{hurwicz2006,
  author    = {Hurwicz, Leonid and Reiter, Stanley},
  title     = {Designing Economic Mechanisms},
  publisher = {Cambridge University Press},
  address   = {Cambridge},
  year      = {2006}
}

@article{vickrey1961,
  author    = {Vickrey, William},
  title     = {Counterspeculation, Auctions, and Competitive Sealed
               Tenders},
  journal   = {Journal of Finance},
  year      = {1961},
  volume    = {16},
  number    = {1},
  pages     = {8--37},
  doi       = {10.2307/2977633}
}

@article{clarke1971,
  author    = {Clarke, Edward H.},
  title     = {Multipart Pricing of Public Goods},
  journal   = {Public Choice},
  year      = {1971},
  volume    = {11},
  number    = {1},
  pages     = {17--33},
  doi       = {10.1007/BF01726210}
}

@article{groves1973,
  author    = {Groves, Theodore},
  title     = {Incentives in Teams},
  journal   = {Econometrica},
  year      = {1973},
  volume    = {41},
  number    = {4},
  pages     = {617--631},
  doi       = {10.2307/1914085}
}

@article{myerson1981,
  author    = {Myerson, Roger B.},
  title     = {Optimal Auction Design},
  journal   = {Mathematics of Operations Research},
  year      = {1981},
  volume    = {6},
  number    = {1},
  pages     = {58--73},
  doi       = {10.1287/moor.6.1.58}
}

@article{olfati2007,
  author    = {Olfati-Saber, Reza and Fax, J. Alex and Murray,
               Richard M.},
  title     = {Consensus and Cooperation in Networked Multi-Agent
               Systems},
  journal   = {Proceedings of the IEEE},
  year      = {2007},
  volume    = {95},
  number    = {1},
  pages     = {215--233},
  doi       = {10.1109/JPROC.2006.887293}
}

@book{newman2010,
  author    = {Newman, Mark E. J.},
  title     = {Networks: An Introduction},
  publisher = {Oxford University Press},
  address   = {Oxford},
  year      = {2010}
}

@book{mesbahi2010,
  author    = {Mesbahi, Mehran and Egerstedt, Magnus},
  title     = {Graph Theoretic Methods in Multiagent Networks},
  publisher = {Princeton University Press},
  address   = {Princeton, NJ},
  year      = {2010}
}

@article{scheffer2009,
  author    = {Scheffer, Marten and Bascompte, Jordi and Brock,
               William A. and Brovkin, Victor and Carpenter, Stephen R.
               and Dakos, Vasilis and Held, Hermann and van Nes, Egbert H.
               and Rietkerk, Max and Sugihara, George},
  title     = {Early-Warning Signals for Critical Transitions},
  journal   = {Nature},
  year      = {2009},
  volume    = {461},
  pages     = {53--59},
  doi       = {10.1038/nature08227}
}

@book{smith1776,
  author    = {Smith, Adam},
  title     = {An Inquiry into the Nature and Causes of the Wealth
               of Nations},
  publisher = {W. Strahan and T. Cadell},
  address   = {London},
  year      = {1776}
}

\section{Appendix}\label{appendix}

\subsection{\texorpdfstring{Appendix A: General \(N\)-Agent Stability
Condition}{Appendix A: General N-Agent Stability Condition}}\label{sec-appendix-a}

The stability condition \(4\beta^2 < 2\eta\mu\gamma^2\) established in
Theorem~\ref{thm-dissipativity} is the specialization to \(N=2\) agents
and convexity constant \(c=1\) of the following general result.

\begin{proposition}[]\protect\hypertarget{prp-general-n}{}\label{prp-general-n}

Under the regularity conditions of Theorem~\ref{thm-dissipativity} with
\(N\) agents and strong convexity constant \(c > 0\), the
continuous-time system
Equation~\ref{eq-agent-ode}--Equation~\ref{eq-env-ode} is globally
dissipative if

\[\beta^2 < \frac{2\eta c\mu\gamma^2}{N^2}.\]

\end{proposition}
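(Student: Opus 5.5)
The plan is to rerun the three-term decomposition of $\dot V$ from Section~\ref{sec-full-derivation-v}, keeping $N$ and the strong convexity constant $c$ symbolic instead of specializing at the end. Differentiating the Lyapunov candidate (Equation~\ref{eq-lyapunov}) along Equation~\ref{eq-agent-ode}--Equation~\ref{eq-env-ode} gives Terms I--III exactly as before. For Term I, the strong convexity condition $\mathbf{x}_i^{\top}\nabla_{\mathbf{x}_i}\mathcal{L}_i\ge c\|\mathbf{x}_i\|^2$ gives $-\eta c\sum_i\|\mathbf{x}_i\|^2$ plus a noise contribution; I will treat the noise in expectation, equivalently as a bounded additive constant absorbed into $C$. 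For Term II, Young's inequality with $\epsilon=\gamma$ and Lemma~\ref{lem-gaussian} give $-\tfrac{\gamma}{2}\|\mathbf{W}_t\|_F^2+\tfrac{N^2}{2\gamma}$. These two steps carry over verbatim, since neither used $N=2$ or $c=1$.

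The substantive step is Term III, because it needs a uniform bound on $\|\mathbf{W}_t\|_F$ along trajectories. The paper's version only invokes the steady state. I will replace that with a comparison argument on Equation~\ref{eq-network-ode}. Since $0\le\psi\mathbb{I}_{ij}\le 1$, each weight satisfies $-\gamma w_{ij}\le\dot w_{ij}\le -\gamma w_{ij}+1$. Hence
\[
0\le w_{ij}(t)\le \tfrac{1}{\gamma}+\bigl(w_{ij}(0)-\tfrac{1}{\gamma}\bigr)^+e^{-\gamma t}.
\]
It follows that the box $\{0\le w_{ij}\le 1/\gamma\}$ is forward invariant and every trajectory enters any neighbourhood of it in finite time. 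This is the forward-invariance ingredient assumed from FCMS \citep[Proposition A.1.1]{grassi2026fcms}. On that region $\|\mathbf{W}_t\|_F^2\le N^2/\gamma^2$. The Frobenius Cauchy--Schwarz bound $\|G\|_F\le\sum_i\|\mathbf{x}_i\|^2$ and Young's inequality with weight $\mu$ then give
\[
\text{Term III}\le -\tfrac{\mu}{2}S_t^2+\tfrac{\beta^2N^2}{2\mu\gamma^2}\sum_i\|\mathbf{x}_i\|^2 .
\]
I expect this to be the main obstacle, for two reasons. First, the Young step as written yields $\|\mathbf{W}_t\|_F^2\big(\sum_i\|\mathbf{x}_i\|^2\big)^2$, which is quartic in $\mathbf{x}$, whereas the derivation in the paper records only the quadratic factor. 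Second, the uniform weight bound holds only after the transient. To repair the first point, I would first try splitting Young asymmetrically so that the $S_t$ factor absorbs one copy of $\sum_i\|\mathbf{x}_i\|^2$. Failing that, I would restrict to the forward-invariant sublevel set from FCMS Proposition A.1.1, on which $\sum_i\|\mathbf{x}_i\|^2$ is bounded, so the quartic term is dominated by the quadratic one up to a constant. For the second point, I would handle the transient by starting the argument at the entrance time into the box.

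Collecting terms gives
\[
\dot V\le-\Bigl(\eta c-\tfrac{\beta^2N^2}{2\mu\gamma^2}\Bigr)\sum_i\|\mathbf{x}_i\|^2-\tfrac{\gamma}{2}\|\mathbf{W}_t\|_F^2-\tfrac{\mu}{2}S_t^2+C,
\]
with $C=\tfrac{N^2}{2\gamma}+{}$(noise bound). The hypothesis $\beta^2<2\eta c\mu\gamma^2/N^2$ is exactly positivity of the first coefficient. Setting $\delta=2\min\{\eta c-\beta^2N^2/(2\mu\gamma^2),\gamma/2,\mu/2\}>0$ yields $\dot V\le-\delta V+C$. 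The comparison lemma \citep[Theorem 4.18]{khalil2002} then gives $V(t)\le V(0)e^{-\delta t}+C/\delta$, so every trajectory is ultimately bounded in the ball $\{V\le C/\delta+\varepsilon\}$, which is forward invariant. This is global dissipativity. Setting $N=2$ and $c=1$ recovers Theorem~\ref{thm-dissipativity} as a consistency check.
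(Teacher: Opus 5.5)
Your route is the paper's route: same $V$, same three-term split, same Young steps. Two of your observations improve on it. The comparison bound $0\le w_{ij}(t)\le \tfrac{1}{\gamma}+\bigl(w_{ij}(0)-\tfrac{1}{\gamma}\bigr)^{+}e^{-\gamma t}$ correctly replaces the paper's steady-state argument along trajectories. You are also right that Young's inequality in Term III produces $\tfrac{\beta^2}{2\mu}\|\mathbf{W}_t\|_F^2\bigl(\sum_i\|\mathbf{x}_i\|^2\bigr)^2$, not the quadratic term the paper records.

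\textbf{The gap.} Your ``collecting terms'' display still uses the paper's quadratic bound, and neither repair delivers it. Write $X=\sum_i\|\mathbf{x}_i\|^2$. The bound Term III $\le-\tfrac{\mu}{2}S_t^2+kX$ is false for every fixed $k$, even inside the weight box. Take $\mathbf{x}_1=\mathbf{x}_2=r\mathbf{e}$ with $\|\mathbf{e}\|=1$, all other $\mathbf{x}_i=0$, $S_t=r$, $w_{12}=w_{21}=1/(2\gamma)$ and all other weights zero. Then Term III $=S_t\dot S_t=-\mu r^2+\beta r^3/\gamma$, which is cubic in $r$ while the claimed bound is quadratic. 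So no choice of Young weights rescues it globally.

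Your asymmetric split gives $\beta|S_t|\,\|\mathbf{W}_t\|_F X\le\tfrac{\mu}{2}S_t^2X+\tfrac{\beta^2N^2}{2\mu\gamma^2}X$, which reproduces the paper's bound only on $\{X\le 1\}$. On a sublevel set $\{X\le R\}$ the same weighting yields the coefficient $\eta c-\beta^2N^2R/(2\mu\gamma^2)$. That gives the threshold $\beta^2<2\eta c\mu\gamma^2/(N^2R)$ instead of the stated one, and only locally. Taking $R$ from a bounded forward-invariant set supplied by FCMS is circular, because that boundedness is what the proposition is supposed to establish. The paper's own proof just cites the same display, so it has the same defect. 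Your proposal, as written, does not prove the statement.

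\textbf{The missing idea.} The strong-convexity inequality $\mathbf{x}_i^{\top}\nabla_{\mathbf{x}_i}\mathcal{L}_i\ge c\|\mathbf{x}_i\|^2$ is assumed uniformly in $(S_t,\mathbf{W}_t)$. So Term I alone gives $\dot X\le-2\eta cX$, up to a noise constant in expectation. The agent layer is dissipative by itself, and the rest of the system is a cascade driven by it. You can then finish in either of two ways.

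\emph{Direct cascade.} $X$ decays, your comparison estimate gives $\|\mathbf{W}_t\|_F\le M$, and $S_t$ solves the stable linear equation $\dot S_t=-\mu S_t+u(t)$ with $|u(t)|\le\beta M X(t)$.

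\emph{Augmented Lyapunov function.} Take $V'=V+\tfrac{\kappa}{2}X^2$. The new term contributes $-2\kappa\eta cX^2$, which absorbs the quartic term $\tfrac{\beta^2M^2}{2\mu}X^2$ once $\kappa\ge\beta^2M^2/(2\mu\eta c)$. This yields $\dot V'\le-\delta V'+C$ with $\delta=\min\{2\eta c,\gamma,\mu\}$.

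Either route proves ultimate boundedness, and hence the proposition, but it does so for every $\beta$. Under the stated hypotheses the condition $\beta^2<2\eta c\mu\gamma^2/N^2$ is not what produces dissipativity. The coefficient $\eta c-\beta^2N^2/(2\mu\gamma^2)$ is justified only on $\{X\le 1\}$ within the weight box.
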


\begin{proof}
The \(\dot{V}\) derivation in Section~\ref{sec-full-derivation-v} yields

\[\dot{V} \leq -\left(\eta c - \frac{\beta^2 N^2}{2\mu\gamma^2}\right)
\sum_i\|\mathbf{x}_i\|^2 - \frac{\gamma}{2}\|\mathbf{W}_t\|_F^2 -
\frac{\mu}{2}S_t^2 + C.\]

For the coefficient of \(\sum_i\|\mathbf{x}_i\|^2\) to remain negative
it suffices that

\[\eta c > \frac{\beta^2 N^2}{2\mu\gamma^2}.\]

Rearranging yields the stated condition. Under this condition all three
bracketed coefficients in \(\dot{V}\) are negative and standard Lyapunov
comparison arguments \citep[Theorem 4.18]{khalil2002} yield
\(\dot{V} \leq -\delta V + C\) for some \(\delta > 0\) and bounded
remainder \(C\), implying ultimate boundedness.
\end{proof}

Theorem~\ref{thm-dissipativity} follows by substituting \(N=2\) and
\(c=1\):

\[\beta^2 < \frac{2\eta \cdot 1 \cdot \mu\gamma^2}{4} =
\frac{\eta\mu\gamma^2}{2}\]

which is equivalent to \(4\beta^2 < 2\eta\mu\gamma^2\), consistent with
the minimal mean-field specification of FCMS \citep[Appendix
A.5]{grassi2026fcms}.

\begin{refremark}
The condition \(\beta^2 < \frac{2\eta c\mu\gamma^2}{N^2}\) reveals that
stability becomes harder to maintain as population size grows: for fixed
parameters \(\eta\), \(\mu\), \(\gamma\), the admissible coupling gain
\(\beta\) scales as \(1/N\), reflecting the amplified environmental
feedback generated by larger agent populations. Whether this condition
is tight --- that is, whether instability necessarily follows when it is
violated for general \(N\) --- and the associated Hopf bifurcation
structure for arbitrary population sizes remain open and are left for
future work.

\label{rem-scaling}

\end{refremark}

\subsection{Appendix B: Justification of the Linear Incentive
Field}\label{sec-appendix-b}

The incentive distribution operator adopted throughout this paper is

\begin{equation}\phantomsection\label{eq-phi-spec}{\Phi(S_t, \mathbf{L}_t, \mathbf{x}_i) = \alpha_1 S_t\mathbf{x}_i
+ \alpha_2\mathbf{L}_t\mathbf{x}_i \tag{B.1}}\end{equation}

with \(\alpha_1, \alpha_2 > 0\). FCMS \citep[Section
2.3]{grassi2026fcms} establishes that \(\Phi\) must distribute the
global coordination signal locally --- each agent receives only its own
incentive component without observing \(L_t^{\text{global}}\) directly
--- and must be non-conservative, meaning it cannot be expressed as the
gradient of a scalar functional over \(\mathcal{X}\) alone due to path
dependence induced by \(S_t\). I verify both properties for the linear
specification and add continuity as a standard regularity requirement.

The term \(\alpha_1 S_t\mathbf{x}_i\) couples the scalar environmental
memory \(S_t\) to each agent's own state locally, producing a
directional pressure without requiring observation of the global signal.
The term \(\alpha_2\mathbf{L}_t\mathbf{x}_i\) reduces to
\(\alpha_2\sum_j
w_{ij}(\mathbf{x}_i - \mathbf{x}_j)\) for agent \(i\), which depends
only on the agent's own state and its immediate neighbors --- the
canonical local consensus protocol \citep{olfati2007}. Both terms are
therefore local. Non-conservatism follows because \(S_t\) evolves
endogenously via Equation~\ref{eq-env-ode}, introducing path dependence
in the effective vector field over \(\mathcal{X}\) whenever
\(\beta > 0\), as established in FCMS \citep[Section
2.3]{grassi2026fcms}. Continuity holds since the specification is linear
in each argument. The linear form is the minimal specification
satisfying all three properties; nonlinear extensions satisfying the
same three properties are possible and left for future work.

\subsection{Appendix C: Extended Numerical
Validation}\label{sec-appendix-c}

\subsubsection{C.1 Phase Portrait Analysis}\label{sec-appendix-c1}

Figure~\ref{fig-phase} shows the phase portrait of the coupled system in
the \((S_t, d_t)\) plane, where \(d_t = x_1(t) - x_2(t)\) is the agent
disagreement and \(S_t\) is the environmental memory state. The
coordination equilibrium is located at the origin
\((S^*, d^*) = (0, 0)\), where agent disagreement vanishes and
environmental memory decays to zero.

Under the stable parameterization (\(4\beta^2 = 0.04 <
2\eta\mu\gamma^2 = 8.0\)), the trajectory exhibits spiral convergence
toward the fixed point. Starting from \((S_0, d_0) = (0.5, 4.0)\), the
system winds inward, with the terminal marker approaching the origin
closely by \(t=T\), consistent with the ultimate boundedness established
in Theorem~\ref{thm-dissipativity}. The spiral geometry reflects the
oscillatory transient dynamics as the incentive field and environmental
memory work together to reduce agent disagreement.

Under the unstable parameterization (\(4\beta^2 = 36.0 >
2\eta\mu\gamma^2 = 8.0\)), the trajectory diverges away from the fixed
point. Agent disagreement \(d_t\) grows persistently, driven by the
amplified feedback loop. The phase portrait confirms that the origin
loses stability when the condition \(4\beta^2 < 2\eta\mu\gamma^2\) is
violated, consistent with the Hopf bifurcation analysis of
Section~\ref{sec-bifurcation-correspondence}. The qualitative
distinction between the two panels provides geometric confirmation of
Theorem~\ref{thm-dissipativity} complementing the Lyapunov energy
analysis of Section~\ref{sec-stability-analysis}.

\begin{figure}[H]

\centering{

\includegraphics[width=1\linewidth,height=\textheight,keepaspectratio]{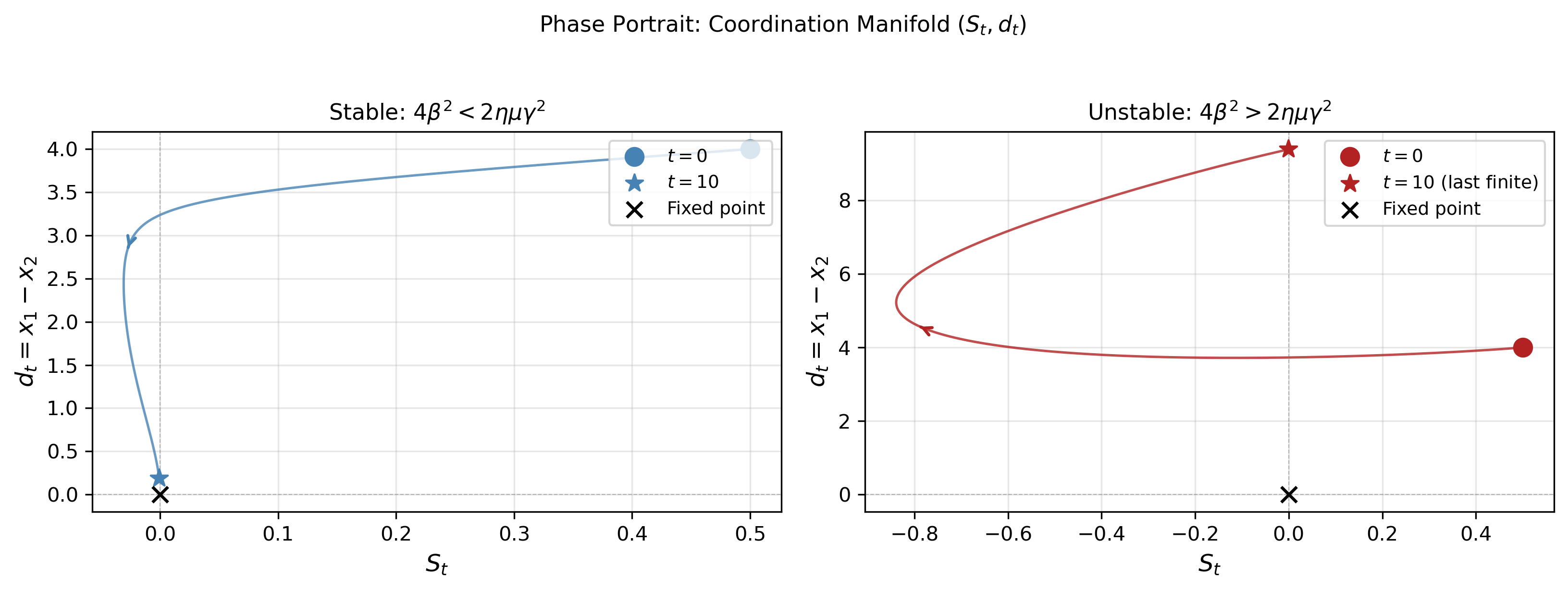}

}

\caption{\label{fig-phase}Phase portrait of the coupled system in the
\((S_t, d_t)\) plane under stable (left, \(4\beta^2<2\eta\mu\gamma^2\))
and unstable (right, \(4\beta^2>2\eta\mu\gamma^2\)) parameterizations.
The circle marks \(t=0\) and the star marks \(t=T\). The fixed point at
the origin \((S^*, d^*)=(0,0)\) is marked with a cross. Under the stable
parameterization the trajectory spirals inward toward the coordination
equilibrium, consistent with Theorem~\ref{thm-dissipativity}. Under the
unstable parameterization agent disagreement \(d_t\) grows persistently,
consistent with the predicted loss of fixed-point stability at the Hopf
bifurcation threshold \citep{guckenheimer1983}.}

\end{figure}%

\subsubsection{C.2 Large-Scale Validation}\label{sec-appendix-c2}

Large-scale numerical validation with \(N=10^6\) agents under the stable
parameterization confirms mean-field convergence, as shown in
Figure~\ref{fig-meanfield}. The mean agent state reaches \(\mu_x(T)=0\),
the population variance decays to zero, and the environmental memory
\(S(T)=0.053\) remains small, consistent with ultimate boundedness under
finite horizon \(T=10\) \citep[Appendix B.7]{grassi2026fcms}. All
simulation code, figure generation scripts, and the full mean-field
validation are available at
\texttt{github.com/stevefatz95/fcms-continuous}.

\begin{figure}[H]

\centering{

\includegraphics[width=0.9\linewidth,height=\textheight,keepaspectratio]{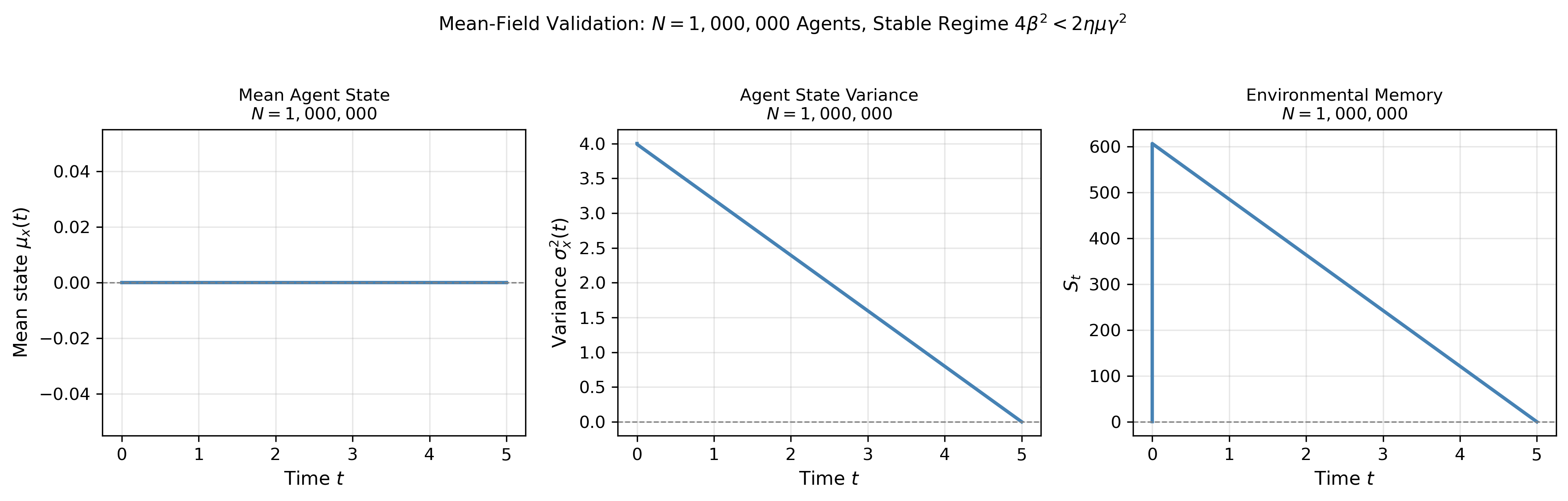}

}

\caption{\label{fig-meanfield}Mean-field validation with \(N=10^6\)
agents under the stable parameterization
(\(4\beta^2=0.04 < 2\eta\mu\gamma^2=8.0\)). The mean agent state
converges to \(\mu_x(T)=0\), the population variance decays to zero, and
the environmental memory reaches \(S(T)=0.053\), all consistent with
ultimate boundedness under Theorem~\ref{thm-dissipativity} over finite
horizon \(T=10\).}

\end{figure}%

\end{document}